\newif\ifarXiv
\pgfplotsset{width=10cm,compat=1.9}
\renewcommand{\P}{\mathbb{P}}
\newcommand{\N}{\mathbb{N}}
\newcommand{\E}{\mathbb{E}}
\newcommand{\R}{\mathbb{R}}
\newcommand{\calF}{\mathcal{F}}
\newcommand{\calX}{\mathcal{X}}
\newcommand{\calY}{\mathcal{Y}}
\newcommand{\calH}{\mathcal{H}}
\newcommand{\calG}{\mathcal{G}}
\newcommand{\calL}{\mathcal{L}}
\newcommand{\calR}{\mathcal{R}}
\newcommand{\calZ}{\mathcal{Z}}
\newcommand{\calS}{\mathcal{S}}
\renewcommand{\square}{\perp}
\newtheorem{informal}{Informal Theorem}
\newtheorem{thm}{Theorem}
\newtheorem{lem}{Lemma}
\newtheorem{condition}{Condition}
\newtheorem{definition}{Definition}
\crefname{prop}{Proposition}{Propositions}
\crefname{rmk}{Remark}{Remarks}
\crefname{cor}{Corollary}{Corollaries}
\crefname{claim}{Claim}{Claims}
\crefname{lemma}{Lemma}{Lemmata}
\crefname{example}{Example}{Examples}
\crefname{corollary}{Corollary}{Corollaries}
\begin{document}

\ifarXiv
\title{Fully Adaptive Composition in Differential Privacy}
\author[1]{Justin Whitehouse}
\author[1]{Aaditya Ramdas}
\author[2]{Ryan Rogers}
\author[1]{Zhiwei Steven Wu}
\affil[1]{Carnegie Mellon University}
\affil[2]{Data and AI Foundations. LinkedIn}

\maketitle 
\else
\icmltitlerunning{Fully-adaptive Composition in Differential Privacy}
\twocolumn[
\icmltitle{Fully-adaptive Composition in Differential Privacy}

\begin{icmlauthorlist}
\icmlauthor{Justin Whitehouse}{cmu}
\icmlauthor{Aaditya Ramdas}{cmu}
\icmlauthor{Ryan Rogers}{link}
\icmlauthor{Zhiwei Steven Wu}{cmu}

\end{icmlauthorlist}

\icmlaffiliation{cmu}{Carnegie Mellon University}
\icmlaffiliation{link}{LinkedIn}

\icmlcorrespondingauthor{Justin Whitehouse}{jwhiteho@andrew.cmu.edu}

\icmlkeywords{Machine Learning, ICML}

\vskip 0.3in
]

\printAffiliationsAndNotice{} 

\fi

\begin{abstract}
   Composition is a key feature of differential privacy. Well-known advanced composition theorems allow one to query a private database quadratically more times than basic privacy composition would permit. However, these results require that the privacy parameters of all algorithms be fixed \textit{before} interacting with the data. To address this, \citet{rogers2016odometer} introduced \textit{fully adaptive} composition, wherein both algorithms and their privacy parameters can be selected adaptively. They defined two probabilistic objects to measure privacy in adaptive composition: \textit{privacy filters}, which provide differential privacy guarantees for composed interactions, and \textit{privacy odometers}, time-uniform bounds on privacy loss. There are substantial gaps between advanced composition and existing filters and odometers. First, existing filters place stronger assumptions on the algorithms being composed. Second, these odometers and filters suffer from large constants, making them impractical. We construct filters that  match the rates of advanced composition, including constants, despite allowing for adaptively chosen privacy parameters. En route we also derive a privacy filter for approximate zCDP. We also construct several general families of odometers. These odometers match the tightness of advanced composition at an arbitrary, preselected point in time, or at all points in time simultaneously, up to a doubly-logarithmic factor. We obtain our results by leveraging advances in martingale concentration. In sum, we show that fully adaptive privacy is obtainable at almost no loss.
\end{abstract}
\section{Introduction}
\label{sec:intro}

\emph{Differential privacy} \citep{dwork2006priv} is an algorithmic criterion that provides meaningful guarantees of individual privacy for analyzing sensitive data. Intuitively, an algorithm is differentially private if similar inputs induce similar distributions on outputs. More formally, an algorithm $A : \calX \rightarrow \calY$ is differentially private if, for any set of outcomes $G \subset \calY$ and any \textit{neighboring} inputs $x, x' \in \calX$,
\begin{equation}
    \label{eq:dp}
    \P(A(x) \in G) \leq e^\epsilon \P(A(x') \in G) + \delta,
\end{equation}
where $\epsilon$ and $\delta$ are the privacy parameters of the algorithm. 

A key property of differential privacy is graceful composition. Suppose $A_1, \dots, A_n$ are algorithms such that each $A_m$ is $(\epsilon_m, \delta_m)$-differentially private. Advanced composition~\citep{dwork2010boosting, kairouz2015composition} states that, for any $\delta' > 0$, the \textit{composed} sequence of algorithms is $(\epsilon, \delta)$-differentially private, where
$\delta = \delta' + \sum_{m \leq n}\delta_m$, and 
\begin{equation}
\label{eq:adv_comp}
\epsilon = \sqrt{2\log\left(\frac{1}{\delta'}\right)\sum_{m \leq n}\epsilon_m^2} + \sum_{m \leq n}\epsilon_m\left(\frac{e^{\epsilon_m} - 1}{e^{\epsilon_m} + 1}\right).
\end{equation}
When all privacy parameters are the same and small, we roughly have $\epsilon = O(\sqrt{n}\epsilon_m)$. Hence, analysts can make use of sensitive datasets with a slow degradation of privacy.

However, there is a major disconnect between most existing results on  privacy composition and modern data analysis. As analysts view the outputs of algorithms, the future manner in which they interact with the data changes. Advanced composition allows analysts to adaptively select algorithms, but not privacy parameters. In many cases, analysts may wish to choose the subsequent privacy parameters based on the outcomes of the previous private algorithms. For example, if an analyst learns, from past computations, that they only need to run one more computation, they should be able to use the remainder of their privacy budget in the final round. Likewise, if an analyst is having a hard time deriving conclusions, they should be allowed to adjust privacy parameters to extend the allowable number of computations.

This desideratum has motivated the study of \textit{fully adaptive} composition, wherein one is allowed to adaptively select the privacy parameters of the algorithms. \citet{rogers2016odometer} define two probabilistic objects which can be used to ensure privacy guarantees in fully adaptive composition. The first, called a \textit{privacy filter}, is an adaptive stopping condition that ensures an entire interaction between an analyst and a dataset retains a pre-specified target privacy level, even when the privacy parameters are chosen adaptively. The second, called a \textit{privacy odometer}, provides a sequence of high-probability upper bounds on how much privacy has been lost up to any point in time. While this work took the first steps towards fully adaptive composition, their filters and odometers suffered from large constants and the latter suffered from sub-optimal asymptotic rates.

We show that, as long as a target privacy level is pre-specified, one can obtain the same rate as advanced composition, including constants. We also construct families of privacy odometers that are not only tighter than the originals, but can be optimized for various target levels of privacy. Overall, we show that full adaptivity is not a cost---but rather a feature---of differential privacy.

\subsection{Related Work}

\paragraph{\textbf{Privacy Composition:}} 
There is a long line of work on privacy composition. The ``basic composition" theorem states that, when composing private algorithms, the privacy parameters (both $\epsilon$ and $\delta$) add up linearly \citep{dwork2006priv, dwork2006our, dwork2009differential}. 
The ``advanced composition" theorem allows the total $\epsilon$ to grow sublinearly with a small degradation on $\delta$ \citep{dwork2010boosting}.
Later work \citep{kairouz2015composition, murtagh2016complexity} studies ``optimal" composition, a computationally intractable formula that tightly characterizes the overall privacy of composed mechanisms.

More recently, several variants of privacy have been studied including (zero)-concentrated differential privacy (zCDP)~\citep{bun2016concentrated, DworkR16}, Renyi differential privacy (RDP)~\citep{mironov2017renyi}, and $f$-differential privacy ($f$-DP)~\citep{dong2019gaussian}. These all exhibit tighter composition results than differential privacy, but for restricted classes of mechanisms. These results do not allow adaptive choices of privacy parameters.

\paragraph{Privacy Filters and Odometers:}
\citet{rogers2016odometer} originally introduced privacy filters and odometers, which allow privacy composition with adaptively selected privacy parameters. While their contributions provide a decent approximation of advanced composition, their bounds suffer from large constants, which prevents practical usage. Our work directly improves over these initial results. First, we construct privacy filters essentially matching advanced composition. We also provide flexible families of privacy odometers that outperform those of \citet{rogers2016odometer}. 



\citet{feldman2020individual} leverage RDP to construct R\'enyi filters, where they require individual mechanisms to satisfy RDP. Since our proof establishes a new privacy filter for approximate zCDP \citep{bun2016concentrated}, our results also extend to approximate RDP \citep{Papernot022}, which directly generalizes their R\'enyi filter. Even though it is also possible to obtain a privacy filter for $(\epsilon, \delta)$-DP through R\'enyi filters \citep{feldman2020individual}, this result requires a stronger assumption that algorithms being composed satisfy \textit{probabilistic} (i.e.\! point-wise) differential privacy \citep{kasiviswanathan2014semantics}.
 Since converting from differential privacy to probabilistic differential privacy can be costly (see Lemma~\ref{lem:dp_to_pdp}), our filters demonstrate an improvement by avoiding the conversion cost. 

More recently,  \citet{KTH22} and \citet{SmithThakurta22} provide privacy filters for Gaussian DP (GDP) \citep{dong2019gaussian}. However,  their results do not hold for more general mechanisms under $f$-DP and therefore cannot handle algorithms with rare ``catastrophic" privacy failure events, in which the privacy loss goes to infinity. Both of our $(\epsilon, \delta)$-filter and approximate zCDP filters can handle such events.

\citet{feldman2020individual} and \citet{lecuyer2021practical} construct RDP odometers. The former work sequentially composes R\'enyi filters and the latter work simultaneously runs multiple R\'enyi filters and takes a union bound. Neither odometer provides high probability, time-uniform bounds on privacy loss, making these results incomparable to our own. We believe our notion of odometers, which aligns with that of \citet{rogers2016odometer}, is more natural.


To prove our results, we leverage time-uniform concentration results for martingales \citep{howard2020line, howard2021unif}. The bounds in these papers directly improve over related self-normalized concentration results \citep{de2004self, chen2014exponential}. These latter bounds were leveraged in \citet{rogers2016odometer} to construct filters and odometers.
\begin{figure*}[h!]
    \centering
    \subfloat[Comparing lower order terms]{
        \includegraphics[width=0.38\textwidth]{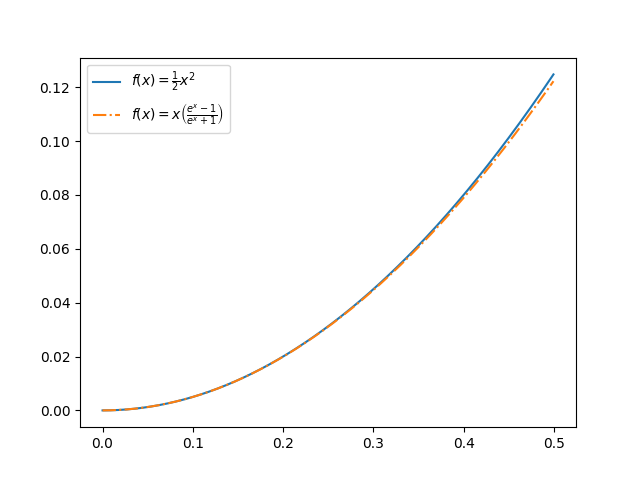}
        \label{fig:gap:gap}
    }
    \subfloat[Comparing privacy odometers]{
        \includegraphics[width=0.38\textwidth]{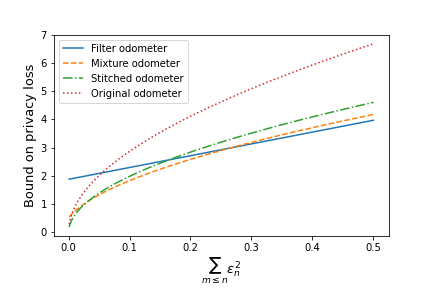}
        \label{fig:gap:odom}
    }
    \caption{Figure~\ref{fig:gap:gap} compares the lower order terms of advanced composition and our privacy filter. Figure~\ref{fig:gap:odom} compares the original odometer of~\citet{rogers2016odometer} with our odometers (filter, mixture, and stitched).
    }
    \label{fig:gap}
\end{figure*}
\subsection{Summary of Contributions}
\label{sec:intro:contribtions}

In this work, we provide two primary contributions. We present these results in full rigor following a brief discussion of privacy basics and martingale theory in Section~\ref{sec:background}.
    
    {\paragraph{Privacy Filters:}} In Theorem~\ref{thm:fully_adaptive} of Section~\ref{sec:filter}, we construct \textit{privacy filters} that match the rate of advanced composition, significantly improving over results of \citet{rogers2016odometer}. Our filter follows from a more general approximate zCDP/RDP filter \citep{bun2016concentrated, Papernot022} presented in Theorem~\ref{thm:azcdp}. In particular, this approximate zCDP/RDP filter greatly generalizes existing filters from the pure RDP setting \citep{feldman2020individual}. This extension allows us to capture a broader class of algorithms and avoids the conversion loss when translating bounds between pure RDP and $(\epsilon, \delta)$-differential privacy. We state an informal version of filter in the case of approximate differential privacy below\footnote{In Appendix \ref{sec:alterproof}, we provide an alternative proof for our privacy filter result through reductions to generalized randomized response. While it gives the exact same rates, we believe it could be of independent interest. For example, it may be useful for obtaining filters with rates like the optimal composition \citep{murtagh2016complexity, kairouz2015composition}, which used a similar reduction to randomized response in their analysis.}.
    \begin{informal}[Improved Privacy Filter]
\label{thm:informal}
Fix target privacy parameters $\epsilon > 0$ and $\delta > 0$, and suppose $(A_n)_{n \geq 1}$ is an adaptively selected sequence of algorithms. Assume that $A_n$ is $(\epsilon_n, \delta_n)$-DP conditioned on the outputs of the first $n - 1$ algorithms, where $\epsilon_n$ and $\delta_n$ may depend on outputs of $A_1, \dots, A_{n - 1}$.  If a data analyst stops interacting with the data before $\sqrt{2\log\left(\frac{1}{\delta}\right)\sum_{m \leq n + 1}\epsilon_m^2} + \frac{1}{2}\sum_{m \leq n + 1}\epsilon_m^2 > \epsilon$, then the entire interaction is $(\epsilon, \delta)$-DP.

\end{informal}
    
    {\paragraph{Privacy Odometers:}} In Theorem~\ref{thm:new_odometers} of Section~\ref{sec:odometers}, we construct improved \textit{privacy odometers} --- that is, sequences of upper bounds on privacy loss which are all simultaneously valid with high probability. Our three families of odometers theoretically and empirically outperform those of \citet{rogers2016odometer}. See Figure~\ref{fig:gap:odom} for a comparison.

For both results, our key insight is to view adaptive privacy composition as depending not on the number of algorithms being composed, but rather on the sums of squares of privacy parameters, $\sum_{m \leq n}\epsilon_m^2$. This shift to looking at ``intrinsic time" allows us to apply recent advances in time-uniform concentration~\citep{howard2020line, howard2021unif} to privacy loss martingales. Overall, our results show that there is essentially no cost for fully adaptive private data analysis. 

\ifarXiv
\section{Background on Differential Privacy}
\label{sec:background}
Throughout, we assume all algorithms map from a space of datasets $\calX$ to outputs in a measurable space, typically either denoted $(\calY, \calG)$ or $(\calZ, \calH)$. For a sequence of algorithms $(A_n)_{n \geq 1}$, we often consider the composed algorithm $A_{1:n} := (A_1, \dots, A_n)$. For more background on measure-theoretic matters, as well as on the notion of neighboring datasets, see Appendix~\ref{app:notation}.

We start by formalizing a generalization of differential privacy in which the privacy parameters of an algorithm $A_n$ can be functions of the outputs of $A_1, \dots, A_{n - 1}$. 
In particular, we replace the probabilities in Equation~\eqref{eq:dp} with conditional probabilities given relevant random variables. 

\begin{definition}[Conditional Differential Privacy]
\label{def:cdp}
Suppose $A$ and $B$ are algorithms mapping from a space $\calX$ to measurable spaces $(\calY, \calG)$ and $(\calZ, \calH)$ respectively. Suppose $\epsilon, \delta : \calZ \rightarrow \R_{\geq 0}$ are measurable functions. We say the algorithm $A$ is $(\epsilon, \delta)$-differentially private conditioned on $B$ if, for any neighbors $x, x' \in \calX$ and for all measurable sets $G \in \calG$, we have
\begin{align*}
&\P\left(A(x) \in G\mid B(x)\right) \\
&\;\;\leq e^{\epsilon(B(x))}\P\left(A(x') \in G \mid B(x)\right) + \delta(B(x)).
\end{align*}

For conciseness, we will write either $\epsilon$ or $\epsilon(x)$ for $  \epsilon(B(x))$ and likewise $\delta$ or $\delta(x)$ for $\delta(B(x))$.
\end{definition}

In the $n$th round of adaptive composition, we will set $A := A_n$ and $B := A_{1:n -1}$. In this setting, the analyst has functions $\epsilon_n, \delta_n :\calY^{n - 1} \rightarrow \R_{\geq 0}$ and takes the $n$th round privacy parameters to be $\epsilon_n(A_{1:n-1}(x))$ and $\delta_n(A_{1:n -1}(x))$. In other words, the analyst uses the outcome of the first $n - 1$ algorithms to decide the level of privacy for the $n$th algorithm, ensuring that $A_n$ is $(\epsilon_n, \delta_n)$-differentially private conditioned on $A_{1:n - 1}$.

We will also leverage the notion of \emph{zero-concentrated differential privacy (zCDP)} \citep{bun2016concentrated}, which often provides a cleaner analysis for privacy composition. First, we will recall the definition of R\'enyi divergence.

\begin{definition}
The R\'enyi divergence from $P$ to $Q$ of order $\lambda \geq 1$ is defined as
\[
D_\lambda(P \| Q) := \frac{1}{\lambda - 1} \log\left( \E_{Y\sim P} \left[ \left( \frac{P(Y)}{Q(Y)}\right)^{\lambda - 1} \right] \right).
\]
\end{definition}

The notion of zCDP bounds the R\'enyi divergence from $A(x)$ to $A(x')$ for any neighbors $x$ and $x'$. We will focus on a conditional version of a more general definition called approximate zCDP \citep{bun2016concentrated, Papernot022} that permits a small probability of unbounded R\'enyi divergence. The conditional approximate zCDP definition we provides uses the convex mixture formulation  adapted from \citet{Papernot022}, since it is more convenient for our proof. In Appendix \ref{sec:equiv}, we will show that in the case $\delta$ and $\rho$ are constant, this definition is equivalent to the original definition in \citet{bun2016concentrated}.

\begin{definition}[Conditional Approximate zCDP]\label{def:approxazcdp}
Suppose $A: \calX \times \calZ \to \calY$ with outputs in a measurable space $(\calY, \calG)$. Suppose $\delta, \rho: \calY \rightarrow \R_{\geq 0}$. We say the algorithm $A$ satisfies conditional $\delta(z)$-approximate  $\rho(z)$-zCDP if, for all $z \in \calZ$ and any neighboring datasets $x, x'$, there exist probability transition kernels\footnote{A \textit{probability transition kernel} $P' : \calZ \times \calG \rightarrow [0, 1]$ is a mapping such that $P(\cdot \mid z) : \calG \rightarrow [0, 1]$ is a probability measure for each $z \in \calZ$.} $P', P'', Q', Q'' : \calZ \times \calG \rightarrow [0, 1]$ such that the conditional outputs are distributed according to the following mixture distributions:
\begin{align*}
 &A(x; z)  \sim (1 - \delta(z))P'(\cdot \mid z) + \delta(z) P''(\cdot \mid z) \\
 &A(x'; z) \sim (1 - \delta(z))Q'(\cdot \mid z) + \delta(z) Q''(\cdot \mid z),
\end{align*}
where for all $\lambda\geq 1$, $D_\lambda(P'(\cdot \mid z) \| Q'(\cdot \mid z)) \leq \rho(z) \lambda$ and $D_\lambda(Q'(\cdot \mid z) \| P'(\cdot \mid z)) \leq \rho(z) \lambda$ for all $z \in \calZ$.
\end{definition}
\else
\section{Background on Differential Privacy}
\label{sec:background}
Throughout, we assume all algorithms map from a space of datasets $\calX$ to outputs in a measurable space, typically either denoted $(\calY, \calG)$ or $(\calZ, \calH)$. For a sequence of algorithms $(A_n)_{n \geq 1}$, we often consider the composed algorithm $A_{1:n} := (A_1, \dots, A_n)$. For more background on measure-theoretic matters, as well as on the notion of neighboring datasets, see Appendix~\ref{app:notation}.

We start by formalizing a generalization of differential privacy in which the privacy parameters of an algorithm $A_n$ can be functions of the outputs of $A_1, \dots, A_{n - 1}$. 
In particular, we replace the probabilities in Equation~\eqref{eq:dp} with conditional probabilities given relevant random variables. 

\begin{definition}[Conditional Differential Privacy]
\label{def:cdp}
Suppose $A$ and $B$ are algorithms mapping from a space $\calX$ to measurable spaces $(\calY, \calG)$ and $(\calZ, \calH)$ respectively. Suppose $\epsilon, \delta : \calZ \rightarrow \R_{\geq 0}$ are measurable functions. We say the algorithm $A$ is $(\epsilon, \delta)$-differentially private conditioned on $B$ if, for any neighbors $x, x' \in \calX$ and for all measurable sets $G \in \calG$, we have
\begin{align*}
&\P\left(A(x) \in G\mid B(x)\right) \\
&\;\;\leq e^{\epsilon(B(x))}\P\left(A(x') \in G \mid B(x)\right) + \delta(B(x)).
\end{align*}

For conciseness, we will write either $\epsilon$ or $\epsilon(x)$ for $  \epsilon(B(x))$ and likewise $\delta$ or $\delta(x)$ for $\delta(B(x))$.
\end{definition}

In the $n$th round of adaptive composition, we will set $A := A_n$ and $B := A_{1:n -1}$. In this setting, the analyst has functions $\epsilon_n, \delta_n :\calY^{n - 1} \rightarrow \R_{\geq 0}$ and takes the $n$th round privacy parameters to be $\epsilon_n(A_{1:n-1}(x))$ and $\delta_n(A_{1:n -1}(x))$. In other words, the analyst uses the outcome of the first $n - 1$ algorithms to decide the level of privacy for the $n$th algorithm, ensuring that $A_n$ is $(\epsilon_n, \delta_n)$-differentially private conditioned on $A_{1:n - 1}$.

We will also leverage the notion of \emph{zero-concentrated differential privacy (zCDP)} \citep{bun2016concentrated}, which often provides a cleaner analysis for privacy composition. First, we will recall the definition of R\'enyi divergence.

\begin{definition}
The R\'enyi divergence from $P$ to $Q$ of order $\lambda \geq 1$ is defined as
\[
D_\lambda(P \| Q) := \frac{1}{\lambda - 1} \log\left( \E_{Y\sim P} \left[ \left( \frac{P(Y)}{Q(Y)}\right)^{\lambda - 1} \right] \right).
\]
\end{definition}

The notion of zCDP bounds the R\'enyi divergence from $A(x)$ to $A(x')$ for any neighbors $x$ and $x'$. We will focus on a more general definition called approximate zCDP \citep{bun2016concentrated, Papernot022} that permits a small probability of unbounded R\'enyi divergence. For the purpose of adaptive composition, we will state the conditional counterpart of this definition.\footnote{\swedit{The approximate zCDP definition we state uses the convex mixture formulation  adapted from \citet{Papernot022}, since it is more convenient for our proof. In Appendix \ref{sec:equiv}, we will show that this definition is equivalent to the original definition in \citet{bun2016concentrated}.}}

\begin{definition}[Conditional approximate zCDP]\label{def:approxazcdp}
Supppose $A$ and $B$ are algorithms with inputs in space $\calX$ and outputs in measurable spaces $(\calY, \calG)$ and $(\calZ, \calH)$. Suppose $\delta, \rho: \calZ \rightarrow \R_{\geq 0}$ are measurable. We say the algorithm $A$ is $\delta$-approximate  $\rho$-zCDP conditioned on $B$ if, for any neighboring datasets $x, x'$, there exist distributions $P', P'', Q', Q''$ such that the conditional outputs are distributed according to the following mixture distributions:
\begin{align*}
 &A(x) \mid B(x) \sim (1 - \delta(B(x)))P' + \delta(B(x)) P'' \\
 &A(x')\mid B(x) \sim (1 - \delta(B(x))Q' + \delta(B(x)) Q'',
\end{align*}
where for all $\lambda\geq 1$, $D_\lambda(P' \| Q') \leq \rho(B(x)) \lambda$ and $D_\lambda(Q' \| P') \leq \rho(B(x)) \lambda$. For succinctness, we will write $\rho(x)$ for $\rho(B(x))$ and $\delta(x)$ for $\delta(B(x))$.
\end{definition}
\fi
We will also use the notions of filtration and martingales.

\paragraph{Filtration and Martingales:}
\label{sec:martingales:basics}
A process $(X_n)_{n \in \N}$ is said to be a martingale with respect to a filtration $(\calF_n)_{n \in \N}$ if, for all $n \in \N$, (a) $X_n$ is $\calF_n$-measurable, (b) $\E|X_n| < \infty$,  and (c) $\E(X_n \mid \calF_{n - 1}) = X_{n - 1}$. Correspondingly, $(X_n)_{n \in \N}$ is a supermartingale if $\E(X_n \mid \calF_{n - 1}) \leq X_{n - 1}$. In our context, we will consider the natural filtration $(\calF_n(x))_{n \in \N}$ generated by $(A_n(x))_{n \geq 1}$. In our proofs, we construct the appropriate (super)martingales so that we can leverage the optional stopping theorem and time-uniform concentration to obtain privacy filters and odometers \citep{ville1939etude, howard2020line, howard2021unif}. We present a full exposition of the mathematical tools in Appendix~\ref{app:notation} and \ref{app:concentration}.
\ifarXiv
\section{Privacy Filters}
\label{sec:filter}
We now provide our main results on privacy filter. In general, a privacy filter is a function $N$ that takes the privacy parameters of a sequence of private algorithms as input and decides to stop at some point so that the composition of these algorithms satisfies a pre-specified level of privacy. We will first present a privacy filter for approximate zCDP (Theorem \ref{thm:azcdp}), which will immediately imply the privacy filter result for $(\epsilon,\delta)$-DP (Theorem \ref{thm:fully_adaptive}). Since approximate zCDP bounds R\'enyi divergence of all orders $\lambda$, our proof for Theorem \ref{thm:azcdp} also directly implies a privacy fiter for approximate RDP \citep{Papernot022}, which generalizes the RDP filter by \citet{feldman2020individual}.

Our $(\epsilon, \delta)$-DP filter improves on the rate of the original filter presented in \citet{rogers2016odometer} and matches the rate of advanced composition that requires pre-fixed choices of privacy parameters. Even though it is also possible to obtain an $(\epsilon, \delta)$-DP filter through the result of \citet{feldman2020individual}, our privacy filters avoid their conversion costs and provide a tighter bound.\footnote{\citet[Section 4.3]{feldman2020individual} apply R\'enyi filters to algorithms which satisfy (conditional) probabilistic differential privacy (pDP). In general, a lossy conversion from $(\epsilon, \delta)$-DP to $(\epsilon, \delta)$-pDP is required to apply their filter.} 

We can now state our general privacy filter in terms of approximate zCDP.  

\begin{thm}[Approximate zCDP filter]
\label{thm:azcdp}
Let $(A_n)_{n \geq 1}$ be an adaptive sequence of algorithms, where $A_n : \calX \times \calY^{n - 1} \rightarrow \calY$. Assume that $\delta_n,\rho_n : \calY^{n-1} \to \R_{\geq 0}$.
For any $n \geq 1$, assume that $A_n(\cdot; y_{1:n-1})$ is conditionally $\delta_n(y_{1:n-1})$-approximate $\rho_n(y_{1:n-1})$-zCDP for any prior outcomes $y_{1:n-1}$. We define the function $N: \calY^\infty \to \N$ where
\[
N(y_1, y_2, \cdots) = \inf \left\{n : \sum_{m = 1}^{n+1} \rho_\ell(y_{1:m-1}) > \rho\right\} \land \inf\left\{ n : \sum_{m = 1}^{n + 1} \delta_m(y_{1:m-1}) > \delta\right\}.
\]
Then $A_{1:N(\cdot)}(\cdot)$ is $\delta$-approximate $\rho$-zCDP, where $N(x) = N( (A_{n}(x))_{n \geq 1} ).$ 
\end{thm}

We note that the argument used to prove the above theorem immediately implies a privacy filter for approximate RDP,  and thus Theorem~\ref{thm:azcdp} can be viewed as a strict generalization of the work of \citet{feldman2020individual}. Further, Theorem \ref{thm:azcdp} implies a privacy filter under $(\epsilon, \delta)$-differential privacy. To show this implication, we will use the following conversion results.

\begin{lem}[\citep{bun2016concentrated}]
If $A$ satisfies $(\epsilon, \delta)$-DP, then $A$ satisfies $\delta$-approximate $\frac{1}{2}\epsilon^2$-zCDP. If $A$ satisfies $\delta$-approximate $\rho$-zCDP, then $A$ satisfies $(\rho + 2\sqrt{\rho \ln(1/\delta')}, \delta + (1- \delta) \delta')$-DP.
\end{lem}

We can now obtain our $(\epsilon, \delta)$-privacy filter by a conversion of individual approximate differential privacy parameters to approximate zCDP ones, application of the approximate zCDP filter, and the conversion of approximate zCDP back to approximate differential privacy.

\begin{thm}[$(\epsilon, \delta)$-DP filter]
\label{thm:fully_adaptive} Suppose $(A_n)_{n \geq 1}$ is a sequence of algorithms such that, for any $n \geq 1$, $A_n$ is $(\epsilon_n, \delta_n)$-differentially private conditioned on $A_{1:n-1}$. Let $\epsilon > 0$  and $\delta = \delta' + \delta''$ be target privacy parameters such that $\delta' > 0, \delta'' \geq 0$ and for all outcomes $y = (y_1, y_2, \cdots)$ we have $\sum_{n=1}^\infty \delta_n(y_{1:n-1}) \leq \delta$. 
We define the function $N: \calY^\infty \to \N$ where
\[
N(y_1, y_2, \cdots) = \inf \left\{n : \sum_{m = 1}^{n+1} \epsilon^2_\ell(y_{1:m-1})/2 > \rho\right\} \land \inf\left\{ n : \sum_{m = 1}^{n + 1} \delta_m(y_{1:m - 1}) > \delta\right\}.
\]
Then, the algorithm $A_{1:N(\cdot)}(\cdot)$ is $(\rho + 2 \sqrt{\rho\log(1/\delta)}, \delta)$-DP, where $N(x) := N( (A_{n}(x))_{n \geq 1} ).$
\end{thm}

\begin{proof}[Proof of Theorem~\ref{thm:azcdp}]

In our proof, we assume that $\sum_{n = 1}^\infty \delta_n(y_{1:n - 1}) \leq \delta$ for all sequences $(y_n)_{n \geq 1}$ without loss of generality. 
Let $P_{1:n}$ and $Q_{1:n}$ denote the joint distributions of $(A_1, \dots, A_n)$ with inputs $x$ and $x'$, respectively. We overload notation and write $P_{1:n}(y_1, \dots, y_n)$ and $Q_{1:n}(y_1, \dots, y_n)$ for the likelihood of $y_1, \dots, y_n$ under input $x$ and $x'$ respectively.  We similarly write $P_n(y_n \mid y_{1:n-1})$ and $Q_n(y_n \mid y_{1:n - 1})$ for the corresponding conditional densities.

By Bayes rule, for any $n \in \N,$ we have
\begin{align*}
P_{1:n}(y_{1}, \cdots, y_n) &= \prod_{m=1}^{n} P_m(y_m \mid y_{1:m-1}),\\
Q_{1:n}(y_{1}, \cdots, y_n) &= \prod_{m=1}^{n} Q_m(y_m \mid y_{1:m-1}).
\end{align*}


By our assumption of approximate zCDP at each step $n$, we can write the conditional likelihoods of $P_n$ and $Q_n$ as the following convex combinations: 
\begin{align*}
P_n(y_n \mid y_{1:n-1}) &= (1 - \delta_n(y_{1:n-1})) P'_n(y_n \mid y_{1:n-1}) + \delta_n(y_{1:n-1}) P''_n(y_n \mid y_{1:n-1}),
\\
 Q_n(y_n \mid y_{1:n-1}) &= (1 - \delta_n(y_{1:n-1})) Q'_n(y_n \mid y_{1:n-1}) + \delta_n(y_{1:n-1}) Q''_n(y_n \mid y_{1:n-1}),
\end{align*}
such that for all $\lambda \geq 1$ and all prior outcomes $y_{1:n-1}$,
we have both 
{\small
\begin{align}
&D_\lambda\left(P_n'( \cdot \mid y_{1:n-1} ) ~\|~ Q_n'(\cdot \mid y_{1:n-1}) \right) \leq \rho_n(y_{1:n-1}) \lambda, \\
 &  D_\lambda\left(Q_n'( \cdot \mid y_{1:n-1} ) ~\|~ P_n'( \cdot \mid y_{1:n-1} ) \right)  \leq \rho_n(y_{1:n-1}) \lambda.
    \end{align}
}%

Now, from Lemma~\ref{lem:convex-combo-proof}, we can then write these distributions as a convex combination of ``good'' distributions for which R\'enyi divergence is small, and ``bad'' distributions for which the divergence may be unbounded. In more detail, using the assumption that $\sum_{n=1}^\infty \delta_n(y_{1:n-1}) \leq \delta$ for all seqeunces $(y_n)_{n \geq 1},$ we have, for all $n \geq 1$,
\begin{align}
\label{eq:P}  P_{1:n}(y_1, \cdots, y_n) & = (1-\delta) \underbrace{\prod_{m=1}^n P_m'(y_m | y_{1:m-1})}_{P_{1:n}'(y_1, \cdots, y_n )} + \delta P_{1:n}''(y_1, \cdots, y_n)
\\
\label{eq:Q} Q_{1:n}(y_1, \cdots, y_n) &= (1-\delta) \underbrace{\prod_{m=1}^n Q_m'(y_m | y_{1:m-1})}_{Q_{1:n}'(y_1, \cdots, y_n ) } + \delta Q_{1:n}''(y_1, \cdots, y_n).
\end{align}

From the above, if $N : \calY^\infty \rightarrow \N$ is the time outlined in the theorem statement, it follows that the joint densities\footnote{We ignore measure-theoretic concerns about specifying which dominating measures these densities are defined with respect to.} $P_{1:N}$ of $A_1(x), \cdots A_{N(x)}(x)$ and $Q_{1:N}$ of $A_1(x'), \cdots A_{N(x')}(x')$, and both can be written as a convex combination of distributions $(P_{1:N}', P_{1:N}'')$ and $(Q_{1:N}', Q_{1:N}'')$:
\begin{align*}
P_{1:N}(y_1, y_2, \cdots, y_{N}) & = (1-\delta) \underbrace{\prod_{n=1}^{N} P_n'(y_n | y_{1:n-1})}_{P'(y_1, y_2, \cdots, y_{N})} + \delta P_{1:N}''(y_1, y_2, \cdots, y_{N})
\\
Q_{1:N}(y_1, y_2, \cdots, y_{N}) &= (1-\delta) \underbrace{\prod_{n=1}^{N} Q_n'(y_n | y_{1:n-1})}_{Q'(y_1, y_2, \cdots, y_{N})} + \delta Q_{1:N}''(y_1, y_2, \cdots, y_{N})
\end{align*}
In the above, we notate quantities in terms of ``$N$'' instead of ``$N(x)$'' or ``$N(x')$'' since $N$ only depends on the underlying dataset $x$ or $x'$ \emph{through} the observed sequence of iterates $(y_n)_{n \geq 1}$.

What remains now is to bound the R\'enyi divergence between $P_N'$ and $Q_N'$. We do this using an optional stopping argument for non-negative supermartingales (Lemma~\ref{fact:optstop}). Suppose $(Y_n')_{n \geq 1}$ is a process whose $n$th finite-dimensional distribution is given by $P_n'$.  For any fixed $\lambda \geq 1$, define the process $(M_n^{(\lambda)})_{n \geq 0}$ by:
\begin{align}
M_n^{(\lambda)} &:= \exp\left\{ (\lambda - 1) \sum_{m \leq n}\left[\log\left(\frac{P'_m( Y_m' \mid  Y_{1:m-1}' )}{Q'_m( Y_m' \mid  Y_{1:m-1}' )}\right) - \lambda\rho_m(Y_{1:m - 1}')\right] \right\}.   \label{process2}
\end{align}

It is clear that $M_n^{(\lambda)}$ is a non-negative supermartingale with respect to natural filtration $(\calF_n')_{n \geq 1}$ given by $\calF_n' := \sigma(Y_m' : m \leq n)$, a fact that we confirm in Lemma~\ref{claim:nsm}. We emphasize that $(\calF_n')_{n \geq 1}$ is not in fact the data generating filtration, but rather a tool used for theoretical analysis. In more detail, we consider this filtration because, heuristically, approximate zCDP aims at bounding the moment generating function of a ``good'' portion of the joint distribution --- the true joint distribution may allow some probability of catastrophic failure (i.e.\ unbounded privacy loss). We adopt the same convention that $N := N(y_1, y_2, \dots)$ with the explicit values of $(y_n)_{n \geq 1}$ clear from context. Observe that  $N((Y_n')_{n \geq 1})$ is a stopping time with respect to $(\calF_n')_{n \geq 0}$. We now invoke optional stopping (Lemma~\ref{fact:optstop}), which yields
\begin{align*}
\E[M_{N(Y_1', Y_2',\dots)}^{(\lambda)}] \leq 1 & \implies  \E\left[ \exp\left( (\lambda  -1) \sum_{n\leq N(Y_1', Y_2', \cdots)} \left\{ \log\left(\frac{P'_n( Y_n' \mid  Y_{1:n-1}' )}{Q'_n( Y_n' \mid  Y_{1:n-1}' ) } \right)  - \lambda \rho_n(Y_{1:n-1}')\right\} \right) \right] \leq 1 \\
& \implies  \E\left[ \exp\left( (\lambda  -1) \sum_{n\leq N(Y_1', Y_2', \cdots)} \log\left(\frac{P'_n( Y_n' \mid  Y_{1:n-1}' )}{Q'_n( Y_n' \mid  Y_{1:n-1}' ) } \right)   \right) \right] \leq e^{\lambda (\lambda  -1) \rho} \\
& \implies \E\left[ \exp\left( (\lambda  -1) \log\left(\frac{P'_{N}( Y_{1:N}'  )}{Q'_{N}( Y_{1:N}') } \right)   \right) \right] \leq e^{\lambda (\lambda  -1) \rho}.
\end{align*}
What we have just showed is precisely that
\[
D_{\lambda}\left(P_{N}' \mid Q_N'\right) \leq \rho\lambda,
\]
which is precisely the desired result. A symmetric argument yields an identical bound on $D_\lambda(Q_N' \mid P_N')$. Thus, we have showed the desired result.
\end{proof}

\else
\section{Privacy Filters}
\label{sec:filter}
We now provide our main results on privacy filter. In general, a privacy filter is a function $N$ that takes the privacy parameters of a sequence of private algorithms as input and decides to stop at some point so that the composition of these algorithms satisfies a pre-specified level of privacy. We will first present a privacy filter for approximate zCDP (Theorem \ref{thm:azcdp}), which will immediately imply the privacy filter result for $(\epsilon,\delta)$-DP (Theorem \ref{thm:fully_adaptive}). Since approximate zCDP bounds R\'enyi divergence of all orders $\lambda$, our proof for Theorem \ref{thm:azcdp} also directly implies a privacy fiter for approximate RDP \citep{Papernot022}, which generalizes the RDP filter by \citet{feldman2020individual}.

Our $(\epsilon, \delta)$-DP filter improves on the rate of the original filter presented in \citet{rogers2016odometer} and matches the rate of advanced composition that requires pre-fixed choices of privacy parameters. Even though it is also possible to obtain an $(\epsilon, \delta)$-DP filter through the result of \citet{feldman2020individual}, our privacy filters avoid their conversion costs and provide a tighter bound.\footnote{\citet[Section 4.3]{feldman2020individual} apply R\'enyi filters to algorithms which satisfy (conditional) probabilistic differential privacy (pDP). In general, a lossy conversion from $(\epsilon, \delta)$-DP to $(\epsilon, \delta)$-pDP is required to apply their filter.} 

We can now state our general privacy filter in terms of approximate zCDP.  

\begin{thm}[Approximate zCDP filter]
\label{thm:azcdp}
Let $(A_n)_{n \geq 1}$ be an adaptive sequence of algorithms, and, for any $x$, let $\calF \equiv (\calF_n(x))_{n \in \N}$ be the natural filtration generated by $(A_n(x))_{n \in \N}$. Assume that $\delta_n,\rho_n$ are predictable with respect to $\calF$, meaning that they are $\calF_{n-1}(x)$-measurable.
For any $n \geq 1$, assume that $A_n$ is $\delta_n$-approximate $\rho_n$-zCDP conditioned on 
$A_{1 : (n-1)}$. 
Consider the stopping function $N \colon \mathbb{R}_{\geq 0}^\infty \times \mathbb{R}_{\geq 0}^\infty \rightarrow \mathbb{N}$ given by
\begin{align*}
&N((\rho_n)_{n \geq 1}, (\delta_n)_{n \geq 1}) :=\\
&\;\;\inf\left\{n \colon  \rho < \sum_{m\leq n+1} \rho_m
\quad\mathrm{or}\quad
\delta < \sum_{m\leq n+1} \delta_m
\right\}
\end{align*}
Then $A_{1 : N(\cdot)}(\cdot) \colon \calX \rightarrow \calY$ is $\delta$-approximate $\rho$-zCDP.
\end{thm}

We note that the above theorem immediately implies a privacy filter for approximate RDP,  and thus Theorem~\ref{thm:azcdp} can be viewed as a strict generalization of the work of \citet{feldman2020individual}. Further, Theorem \ref{thm:azcdp} implies a privacy filter under $(\epsilon, \delta)$-differential privacy. To show this implication, we will use the following conversion results.

\begin{lem}[\citep{bun2016concentrated}]
If $A$ satisfies $(\epsilon, \delta)$-DP, then $A$ satisfies $\delta$-approximate $\frac{1}{2}\epsilon^2$-zCDP. If $A$ satisfies $\delta$-approximate $\rho$-zCDP, then $A$ satisfies $(\rho + 2\sqrt{\rho \ln(1/\delta')}, \delta + (1- \delta) \delta')$-DP.
\end{lem}

We can now obtain our $(\epsilon, \delta)$-privacy filter by a conversion of individual approximate differential privacy parameters to approximate zCDP ones, application of the approximate zCDP filter, and the conversion of approximate zCDP back to approximate differential privacy.

\begin{thm}[$(\epsilon, \delta)$-DP filter]
\label{thm:fully_adaptive}
Suppose $(A_n)_{n \geq 1}$ is a sequence of algorithms such that, for any $n \geq 1$, $A_n$ is $(\epsilon_n, \delta_n)$-differentially private conditioned on $A_{1:n-1}$. Let $\epsilon > 0$  and $\delta = \delta' + \delta''$ be target privacy parameters such that $\delta' > 0, \delta'' \geq 0$. Let  $N : \R_{\geq 0}^\infty \times \R_{\geq 0}^\infty \rightarrow \N$ be given by
{\tiny
\begin{align*}
& N((\epsilon_n)_{n \geq 1}, (\delta_n)_{n \geq 1}) := \\
&\inf\left\{n  : \epsilon < \sqrt{2\log\left(\frac{1}{\delta'}\right)\sum_{m \leq n + 1}\epsilon_m^2} + \frac{1}{2}\sum_{m \leq n + 1}\epsilon_m^2 \text{ or } \delta'' <  \sum_{m \leq n + 1}\delta_m\right\}.
\end{align*}
}%
Then, the algorithm $A_{1:N(\cdot)}(\cdot) : \calX \rightarrow \calY^\infty$ is $(\epsilon, \delta)$-DP, where $N(x) := N((\epsilon_n(x))_{n \geq 1}, (\delta_n(x))_{n \geq 1})$.
\end{thm}

{Now we provide a proof for Theorem \ref{thm:azcdp}.} Recall that the output under a privacy filter is a random vector $A_{1:N(x)}(x) = (A_1(x), \dots, A_{N(x)}(x))$.  In our proof, we will also consider the unstopped process $A(x) = (A_1(x), A_2(x), \dots, A_k(x))$.

\begin{proof}[\textbf{Proof of Theorem~\ref{thm:azcdp}}]
\rrcomment{Modified} Let $x, x'$ be neighbors, and $P, Q$ denote the likelihoods of the observed output $A(x)$ when the inputs are $x, x'$ respectively.  The likelihoods of observing the stopped process $A_{1:N(x)}(x)$ under $x$ and $x'$ are:
\begin{align}
\label{eq:P} P(A_{1:N(x)}(x)) &= \prod_{n=1}^{N(x)} P(A_{n}(x) \mid \calF_{n-1}(x)),\\
\label{eq:Q} Q(A_{1:N(x)}(x)) &= \prod_{n=1}^{N(x)} Q(A_{n}(x) \mid \calF_{n-1}(x)).
\end{align}

 It suffices to show that the two likelihoods can be decomposed as weighted mixtures of $P'$ and $P''$, and  $Q'$ and $Q''$ respectively such that the mixture weights on $P'$ and $Q'$ are at least $(1-\delta)$ and for all $\lambda \geq 1$,
 \begin{align}\label{eq:renyibound} 
&\max\Big\{     D_\lambda\left(P'(A_{1:N(x)}(x)) \| Q'(A_{1:N(x)}(x)) \right),\nonumber   \\
&\;\;D_\lambda\left(Q'(A_{1:N(x)}(x))\| P'(A_{1:N(x)}(x))\right)\Big\} \leq \rho\lambda .
 \end{align}
 

By our assumption of conditional approximate zCDP at each step $n$, we can write $P( A_n(x)\mid \calF_{n-1}(x))$ and $Q( A_n(x) \mid \calF_{n-1}(x))$ as the following convex combinations: 
\begin{align*}
P(A_n(x) \mid \calF_{n-1}(x)) &= (1 - \delta_n(x)) P'_n(A_n(x) \mid \calF_{n-1}(x)) \\
&+ \delta_n(x) P''_n(A_n(x) \mid \calF_{n-1}(x)),
\\
Q(A_n(x) \mid \calF_{n-1}(x)) &= (1 - \delta_n(x)) Q'_n(A_n(x) \mid \calF_{n-1}(x)) \\
&+ \delta_n(x) Q''_n(A_n(x) \mid \calF_{n-1}(x)),
\end{align*}
such that for all $\lambda \geq 1$,
we have both 
{\small
\begin{align}
&D_\lambda\left(P'_n( A_n(x) \mid \calF_{n-1}(x)) ~\|~ Q'_n( A_n(x) \mid \calF_{n -1 }(x)) \right) \leq \rho_n(x) \lambda, \\
 &  D_\lambda\left(Q'_n( A_n(x) \mid \calF_{n-1}(x)) ~\|~ P'_n( A_n(x) \mid \calF_{n -1 }(x)) \right)  \leq \rho_n(x) \lambda.
    \end{align}
}%
Now consider the product measures $P'$ and $Q'$ such that for any $n\geq 1$,
\begin{align}
&P'(A_{1:n}(x)) = \prod_{m=1}^n P'_m(A_m(x)\mid \calF_{m-1}(x))\text{ and }\nonumber \\
&Q'(A_{1:n}(x)) = \prod_{m=1}^n Q'_m(A_m(x)\mid \calF_{m-1}(x)).  \label{products}
\end{align}
We will establish inequality \eqref{eq:renyibound}.
For any fixed $\lambda \geq 1$, consider the following processes:
\begin{align}
M_n &:= \sum_{m\leq n} \left\{ \log\left(\frac{P'_m(A_m(x) \mid \calF_{m-1}(x) )}{Q'_m(A_m(x) \mid \calF_{m-1}(x))}\right)  - \lambda \rho_m(x)\right\},   \label{process1}
\\
X_n &:= \exp\left( (\lambda - 1) M_n \right).   \label{process2}
\end{align}
By Lemma \ref{claim:nsm}, $X_n$ is a nonnegative $P'$-supermartingale with respect to $(\calF_n(x))_{n\in \mathbb{N}}$. By the optional stopping theorem for nonnegative supermartingales (Lemma \ref{fact:optstop}), we have 
\begin{equation}
\E_{P'}[X_{N(x)}] \leq \E_{P'}[X_0] = 1.
\end{equation}

By plugging in the definition of $X_n$ and the stopping criterion of $N$, we can bound the R\'enyi divergence $ D_\lambda\left(P'(A_{1:N(x)}(x)) \| Q'(A_{1:N(x)}(x)) \right)\leq \rho \lambda$  (see Lemma \ref{renyibound}), and so inequality \eqref{eq:renyibound} holds by symmetry.

Finally, by Lemma~\ref{lem:convex-combo-proof}, we can rewrite both $P$ and $Q$ as weighted mixtures containing $P'$ and $Q'$, with weights at least $1-\delta$. This completes the proof.
\end{proof}

\fi
\section{Privacy Odometers}
\label{sec:odometers}
 Previously, we constructed privacy filters that matched the rate of advanced composition while allowing \textit{both} algorithms and privacy parameters to be chosen adaptively. While privacy filters require the total level of privacy to be fixed in advance, it is desirable to track the privacy loss at all steps without a pre-fixed budget~\citep{ligett2017accuracy}. We now study privacy odometers
which provide sequences of upper bounds on accumulated privacy loss that are valid at all points in time simultaneously with high probability. 

\subsection{Background on Privacy Loss and Odometers}
To formally introduce privacy odometers, we will first revisit the notion of \emph{privacy loss}, which measures how much information is revealed about the underlying input dataset. For neighbors $x, x' \in \calX$, let $p^x$ and $p^{x'}$ be the densities of $A(x)$ and $A(x')$ respectively. The privacy loss between $A(x)$ and $A(x')$ is defined as
\begin{equation}
\label{eq:ploss}
    \calL(x, x') := \log\left(\frac{p^x(A(x))}{p^{x'}(A(x))}\right).
\end{equation}
By Equation~\eqref{eq:ploss}, a negative privacy loss suggests that the input is more likely to be $x'$, and likewise a positive privacy loss  suggests that the input is more likely to be $x$. We now generalize privacy loss to its conditional counterpart.

\begin{definition}[Conditional Privacy Loss]
\label{def:cploss}
Suppose $A$ and $B$ are as in Definition~\ref{def:cdp}. Suppose $x , x' \in \calX$ are neighbors. Let $p^x(\cdot | \cdot), p^{x'}(\cdot | \cdot): \calY \times \calZ \rightarrow \R_{\geq 0}$ be conditional densities for $A(x)$ and $A(x')$ respectively given $B(x)$.\footnote{To ensure the existence of conditional densities, it suffices to assume that $\calY$ and $\calZ$ are \textit{Polish spaces} under some metrics $d_\calY$ and $d_\calZ$, and that $\calG$ and $\calH$ are the corresponding Borel $\sigma$-algebras associated with $d_\calY$ and $d_\calZ$ \citep{durrett2019probability}. These measurability assumptions are not  restrictive, as Euclidean spaces, countable spaces, and Cartesian products of the two satisfy these assumption.} The privacy loss between $A(x)$ and $A(x')$ conditioned on $B$ is given by
$$
\calL_{B}(x, x') := \log\left(\frac{p^x(A(x)| B(x))}{p^{x'}(A(x) | B(x))}\right).
$$

\end{definition}

Suppose $A_n$ is the $n$th algorithm being run and we have already observed $A_{1:n-1}(x)$ for some unknown input $x \in \calX$. If we are trying to guess whether $x$ or a neighbor $x'$ produced the data, we would consider the privacy loss between $A_n(x)$ and $A_n(x')$ conditioned on $A_{1 : n -1}(x)$. It is straightforward to characterize the privacy loss of a composed algorithm $A_{1:n}$ in terms of the privacy loss of each constituent algorithm $A_1, \cdots, A_n$. Namely, from Bayes rule, 
\begin{equation}
    \calL_{1:n}(x, x') = \sum_{m \leq n}\calL_{m}(x, x'), \label{eq:priv_decomp}
\end{equation}
where $\calL_m(x, x')$ is shorthand for the conditional privacy loss between $A_m(x)$ and $A_m(x')$ given $A_{1:m- 1}(x)$, per Definition~\ref{def:cploss}. Equation~\eqref{eq:priv_decomp} also holds at arbitrary random times $N(x)$ that only depend on the dataset $x \in \calX$ through observed algorithm outputs.

{The simple decomposition of privacy loss noted above motivates the study of an ``alternative", probabilistic definition of differential privacy.
Intuitively, an algorithm should be differentially private if, with high probability, the privacy loss is small. More formally, an algorithm $A: \calX \rightarrow \calY$ is said to be $(\epsilon, \delta)$-\textit{probabilistically differentially private}, or $(\epsilon, \delta)$-pDP for short, if, for all neighboring inputs $x, x' \in \calX$, we have $\P\left( |\calL(x, x')| > \epsilon \right) \leq \delta$. In the previous line (as well as in the remainder of the section), the randomness in $\calL(x, x')$ comes from the randomized algorithm $A$.

Unfortunately, as noted by \citet{kasiviswanathan2014semantics} (in which pDP is called \textit{point-wise indistinguishability}), pDP is a strictly stronger notion than DP. In particular, if an algorithm is $(\epsilon, \delta)$-pDP, it is also $(\epsilon, \delta)$-DP. The converse in general requires a costly conversion.


\begin{lem}[Conversions between DP and pDP \citep{kasiviswanathan2014semantics}]
\label{lem:dp_to_pdp}
If $A$ is $(\epsilon, \delta)$-pDP, then $A$ is also $(\epsilon, \delta)$-DP. Conversely, if $A$ is $(\epsilon, \delta)$-DP, then $A$ is $(2\epsilon, \frac{2\delta}{\epsilon e^\epsilon})$-pDP.
\end{lem}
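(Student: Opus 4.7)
My plan is to treat the two directions separately, as only the second requires any real work.

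For the first direction (pDP $\Rightarrow$ DP), the argument is the standard partitioning trick. Fix neighbors $x, x' \in \calX$ and a measurable $G \in \calG$. I would split $G$ into $G_{\text{good}} := \{y \in G : p^x(y) \leq e^\epsilon p^{x'}(y)\}$ and $G_{\text{bad}} := G \setminus G_{\text{good}}$. On $G_{\text{good}}$, integrating the defining inequality gives $\P(A(x) \in G_{\text{good}}) \leq e^\epsilon \P(A(x') \in G_{\text{good}}) \leq e^\epsilon \P(A(x') \in G)$. On $G_{\text{bad}}$, every $y$ satisfies $p^x(y) > e^\epsilon p^{x'}(y)$, so $\{A(x) \in G_{\text{bad}}\} \subseteq \{\calL(x,x') > \epsilon\}$, and the pDP hypothesis bounds this event by $\delta$. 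Adding the two contributions yields $(\epsilon, \delta)$-DP.

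For the converse (DP $\Rightarrow$ pDP), the strategy is a change-of-measure bootstrap applied separately to the upper and lower tails of $\calL$. Let $E := \{y : p^x(y) > e^{2\epsilon} p^{x'}(y)\}$, the event $\{\calL(x,x') > 2\epsilon\}$ when we plug in $A(x)$. DP applied to $E$ gives $\P(A(x)\in E) \leq e^\epsilon \P(A(x')\in E) + \delta$. But by construction of $E$, a direct integral comparison gives the reverse-type inequality $\P(A(x')\in E) \leq e^{-2\epsilon}\P(A(x)\in E)$. Substituting yields
\begin{equation*}
(1 - e^{-\epsilon})\,\P(A(x)\in E) \leq \delta,
\end{equation*}
i.e.\ $\P(\calL(x,x') > 2\epsilon) \leq \delta/(1 - e^{-\epsilon})$. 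The symmetric argument on $F := \{y : p^x(y) < e^{-2\epsilon} p^{x'}(y)\}$ gives $\P(A(x)\in F) \leq e^{-2\epsilon}\delta/(1 - e^{-\epsilon})$. Summing the two tails and simplifying using the elementary bound $e^\epsilon - 1 \geq \epsilon$ (equivalently $1 - e^{-\epsilon} \geq \epsilon e^{-\epsilon}$) converts the $1/(1 - e^{-\epsilon})$ factor into an $\epsilon$-denominator, producing the claimed bound up to the stated constants.

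The only subtle step is the change-of-measure inequality $\P(A(x') \in E) \leq e^{-2\epsilon}\P(A(x) \in E)$: one has to be careful that it uses only the pointwise definition of $E$ (not DP) and that measurability of the density ratio is ensured by the Polish-space assumption footnoted in Definition~\ref{def:cploss}. Once that is in place, everything else is routine algebra. I expect the combinatorics of getting the precise constant $\tfrac{2\delta}{\epsilon e^\epsilon}$ (versus the slightly looser $\tfrac{2\delta e^\epsilon}{\epsilon}$ that the crudest bounding yields) to require a careful choice between bounding $1 - e^{-\epsilon}$ below by $\epsilon e^{-\epsilon}$ versus by $e^\epsilon - 1$ before vs.\ after combining the two tails, but this is purely a book-keeping exercise rather than a conceptual obstacle.
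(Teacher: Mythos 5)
The paper does not actually prove Lemma~\ref{lem:dp_to_pdp}; it cites the result to \citet{kasiviswanathan2014semantics} and only ever uses the easy forward direction. So the comparison here is really against the stated constant, not against an in-paper argument. Your forward direction (pDP $\Rightarrow$ DP) via the partition of $G$ into a ``good'' and ``bad'' piece is correct. Your converse direction also uses the right conceptual tool — applying DP to the bad set and exploiting the pointwise change-of-measure inequality $\P(A(x')\in E) \leq e^{-2\epsilon}\P(A(x)\in E)$ to get a self-referential bound — and your intermediate computations $\P(A(x)\in E) \leq \delta/(1-e^{-\epsilon})$ and $\P(A(x)\in F) \leq e^{-2\epsilon}\delta/(1-e^{-\epsilon})$ are both correct.

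The problem is the last sentence. You claim the final step is a ``book-keeping exercise,'' but the combined bound
\[
\P\bigl(|\calL(x,x')| > 2\epsilon\bigr) \;\leq\; \frac{\delta\bigl(1+e^{-2\epsilon}\bigr)}{1-e^{-\epsilon}}
\]
is \emph{strictly larger} than the target $\frac{2\delta}{\epsilon e^\epsilon}$ for every $\epsilon > 0$: the comparison reduces to showing $h(\epsilon) := \epsilon(e^\epsilon + e^{-\epsilon}) - 2(1-e^{-\epsilon}) > 0$, and since $h(0)=0$ while $h'(\epsilon) = (e^\epsilon - e^{-\epsilon})(1+\epsilon) > 0$, the inequality has the wrong sign throughout. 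Applying $1 - e^{-\epsilon} \geq \epsilon e^{-\epsilon}$, as you suggest, turns your bound into $\delta(e^\epsilon + e^{-\epsilon})/\epsilon \geq 2\delta/\epsilon$, which is off from the claimed constant by a factor of $e^\epsilon$ rather than matching it; no rearrangement of elementary bounds will close that multiplicative gap, because the direction of the inequality is simply wrong. So there is a genuine gap in the converse direction: either a different argument is needed to hit $\frac{2\delta}{\epsilon e^\epsilon}$, or that constant is not achievable. In fact, it is worth noting that $(\epsilon,\delta)$-randomized response (Definition~\ref{def:rr}) has $\P(|\calL|>2\epsilon) = \delta$ exactly, which already exceeds $\frac{2\delta}{\epsilon e^\epsilon}$ whenever $\epsilon e^\epsilon > 2$, so the constant as stated in the lemma cannot be correct for all $\epsilon>0$; your bound $\delta(1+e^{-2\epsilon})/(1-e^{-\epsilon})$ is consistent with that example. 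You should verify the exact form of the constant against the original \citet{kasiviswanathan2014semantics} statement rather than trying to force your (likely correct) derivation into the paper's quoted form.
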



We note that that \citet{guingona2023comparing} have recently shown that other possible conversion rates from probabilistic differential privacy to approximate differential privacy are possible. However, we note that these conversions require trading off tightness in the approximation parameter $\epsilon$ and the approximation parameter $\delta$. In particular, a fully tight conversion from probabilistic differenial privacy to approximate differential privacy is not possible.
We will work with the conditional counterpart of probabilistic differential privacy (pDP).

\begin{definition}[Conditional Probabilistic Differential Privacy]
\label{def:cpdp}
Suppose $A : \calX \rightarrow \calY$ and $B : \calX \rightarrow \calZ$ are algorithms, and $\epsilon, \delta : \calZ \rightarrow \R_{\geq 0}$ are measurable. Then,  $A$ is said to be $(\epsilon, \delta)$-\textit{probabilistically differentially private conditioned on $B$} if, for any neighbors $x, x' \in \calX$, we have $$\P\left(|\calL_{B}(x, x')| > \epsilon(B(x)) | B(x) \right) \leq \delta(B(x)).$$
\end{definition}

}

While in Theorem~\ref{thm:fully_adaptive} we assumed that the algorithms being composed were \textit{conditionally differentially private}, here, we need to assume \textit{conditional probabilistic privacy}. This is because our goal is not differential privacy, but rather tight control over privacy loss. We conjecture that a version of our privacy odometer (in Theorem~\ref{thm:new_odometers}) that replaces pDP by DP and leaves all else identical does not hold. Our intuition for this conjecture is that there exist simple examples of algorithms satisfying $(\epsilon, \delta)$-DP that don't satisfy $(\epsilon, \delta)$-pDP (see Appendix~\ref{app:fail_dp}, for instance). We believe that, by sequentially composing such algorithms and using anti-concentration results, one can show that some odometers fail to be valid. We leave this as potential future work. In sequential composition, we would assume the $n$th  algorithm $A_n$ is $(\epsilon_n, \delta_n)$-pDP conditioned on $A_{1: n - 1}$. The privacy parameters would be given as functions of $A_{1:n-1}(x)$. Now we state the definition of privacy odometer, which provides bounds on privacy loss under arbitrary stopping conditions (e.g.\! conditions based on model accuracy).

\begin{definition}[Privacy Odometer \citep{rogers2016odometer}]
\label{def:odometer}
Let $(A_n)_{n \geq 1}$ be an adaptive sequence of algorithms such that, for all $n \geq 1$, $A_n$ is $(\epsilon_n, \delta_n)$-pDP conditioned on $A_{1:n - 1}$. Let $(u_n)_{n \geq 1}$ be a sequence of functions where $u_n : \R^{n - 1}_{\geq 0} \times \R^{n - 1}_{\geq 0} \rightarrow \R_{\geq 0}$. Let $\delta \in (0, 1)$ be a target confidence parameter. For $x \in \calX, n \geq 1$, define $U_n(x) := u_n(\epsilon_{1:n-1}(x), \delta_{1:n-1}(x))$. Then, $(u_n)_{n \geq 1}$ is called a $\delta$-privacy odometer if, for all $x, x' \in \calX$ neighbors, we have
$$
\P\left(\exists n \geq 1 : \calL_{1:n}(x, x') > U_n(x)\right) \leq \delta.
$$
\end{definition}

\subsection{Improved Privacy Odometers}
We construct our privacy odometers in Theorem \ref{thm:new_odometers}. Our technical centerpiece is time-uniform concentration inequalities for martingales \citep{ville1939etude, howard2020line, howard2021unif}. For a martingale $(M_n)_{n \in \N}$ and confidence level $\delta > 0$, time-uniform concentration inequalities provides bounds $(U_n)_{n \in \N}$ satisfying $\P(\exists n \in \N : M_n > U_n) \leq \delta$. Thus, if we can create a martingale from privacy loss, we can use time-uniform concentration to construct odometers. Our proof first considers the case where each $A_n$ is $(\epsilon_n, 0)$-pDP and the  \emph{privacy loss martingale} $(M_n)_{n \in \N}$ ~\citep{dwork2010boosting} is given by $M_0 = 0$ and:
{\small
\begin{equation}\label{priv_martin}
    M_n := M_n(x, x') := \calL_{1:n}(x, x') - \sum_{m \leq n}\E\left(\calL_{m}(x, x')| \calF_{n - 1}(x)\right)
\end{equation}
}%
We then extend to the case of $\delta_n \geq 0$ via conditioning.

 To construct their filters and odometers, \citet{rogers2016odometer} use self-normalized concentration inequalities~\citep{de2004self, chen2014exponential}. We instead use advances in time-uniform martingale concentration \citep{howard2020line, howard2021unif}, which yields tighter results.

\begin{thm}
\label{thm:new_odometers}
Suppose $(A_n)_{n \geq 1}$ is a sequence of algorithms such that, for any $n \geq 1$, $A_n$ is $(\epsilon_n, \delta_n)$-pDP conditioned on $A_{1:n-1}$. Let $\delta = \delta' + \delta''$ be a target approximation parameter such that $\delta' > 0, \delta'' \geq 0$. Define $N := N((\delta_n)_{n \geq 1}) := \inf\left\{n \in \N : \delta'' < \sum_{m \leq n +1}\delta_m\right\}$ and $V_n := \sum_{m \leq n}\epsilon_m^2$. Define the following:
\begin{enumerate}
\itemsep=0em
\item{\textbf{Filter odometer.}} For any $\epsilon > 0$, let $y^\ast :=  \left(-\sqrt{2\log\left(\frac{1}{\delta'}\right)} + \sqrt{2\log\left(\frac{1}{\delta'}\right) + \epsilon}\right)^2$. Define  functions $(u_n^F)_{n \geq 1}$ by
\begin{align*}
u_n^F(\epsilon_{1:n}, \delta_{1:n}) := \begin{cases} \infty &n > N \\
\frac{\sqrt{2y^\ast\log\left(\frac{1}{\delta'}\right)}}{2} + \frac{\sqrt{2\log\left(\frac{1}{\delta'}\right)}}{2\sqrt{y^\ast}}V_n + \frac{1}{2}V_n &\text{otherwise}.
\end{cases}
\end{align*}
\item{\textbf{Mixture odometer.}} For any $\gamma >0$, define the sequence of functions $(u_n^M)_{n \geq 1}$ by
\begin{align*}
u_n^M(\epsilon_{1:n}, \delta_{1:n}) := \begin{cases} \infty &n > N\\
\sqrt{2\log\left(\frac{1}{\delta'}\sqrt{\frac{V_n + \gamma}{\gamma}}\right)\left(\gamma + V_n\right)} + \frac{1}{2}V_n &\text{otherwise}.
\end{cases}
\end{align*}
\item{\textbf{Stitched odometer.}} For any $v_0 > 0$, define the sequence of functions $(u_n^S)_{n \geq 1}$ by
{
\begin{align*}
u_n^S(\epsilon_{1:n}, \delta_{1:n}) :=\begin{cases} \infty &\hspace{-2cm}n > N \text{ or }  V_n < v_0\\
1.7\sqrt{V_n\left(\log\log\left(\frac{2V_n}{v_0}\right) + 0.72\log\left(\frac{5.2}{\delta'}\right)\right)} + \frac{1}{2}V_n &\hspace{-.2cm}\text{else}.
\end{cases}
\end{align*}
}%
\end{enumerate}
Then, any of the sequences $(u_n^F)_{n \geq 1}$, $(u_n^M)_{n \geq 1}$, or $(u_n^S)_{n \geq 1}$ is a $\delta$-privacy odometer.
\end{thm}

 The proof of Theorem \ref{thm:new_odometers} can be found in Appendix~\ref{app:proof}. We now provide intuition for our odometers, which are plotted in Figure~\ref{fig:odom_compare}. 
 Our insight is to view odometers not as functions of the number of algorithms being composed, but rather as functions of the intrinsic time $\sum_{m \leq n}\epsilon_m^2$. This reframing allows us to leverage the various time-uniform concentration inequalities discussed in Appendix~\ref{app:concentration}. The filter odometer is the tightest odometer when the value $\sum_{m \leq n}\epsilon_m^2$ is close to a fixed accumulated variance $y^\ast$, but the tightness drops off precipitously when $\sum_{m \leq n}\epsilon_m^2$ is far from $y^\ast$.
The mixture odometer, which is named after the \textit{the method of mixtures}~\citep{robbins1970statistical,delapena2007maximization,howard2021unif}, sacrifices tightness at any fixed point in time to obtain overall tighter bounds on privacy loss. This odometer can be numerically optimized, in terms of $\rho$, for tightness at a predetermined value $\sum_{m \leq n}\epsilon_m^2$. The stitched odometer, whose name derives from Theorem~\ref{fact:stitch}, is similarly tight across time. This odometer requires that $\sum_{m \leq n}\epsilon_m^2$ exceed some pre-selected ``variance" $v_0$ before becoming nontrivial (i.e. finite). Larger values of $v_0$ will yield tighter odometers, albeit at the cost of losing bound validity when accumulated variance is small. With this intuition, we can compare our odometers to the original presented in \citet{rogers2016odometer}.

\begin{figure*}
    \centering
    \subfloat[Comparing filter odometers]{
        \includegraphics[width=0.33\textwidth]{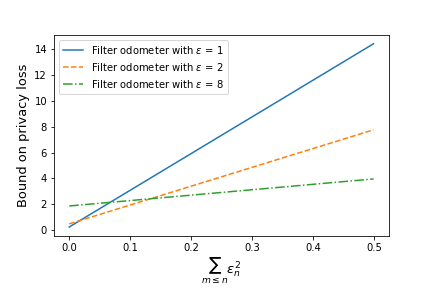}
        \label{fig:odoms:filter}
    }
    \subfloat[Comparing mixture odometers]{
        \includegraphics[width=0.33\textwidth]{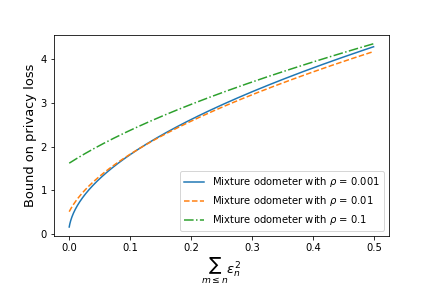}
        \label{fig:odoms:mixture}
    }
    \subfloat[Comparing stitched odometers]{
        \includegraphics[width=0.33\textwidth]{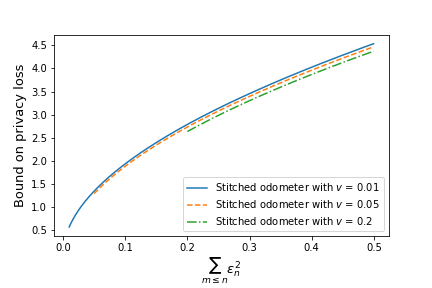}
        \label{fig:odoms:stitch}
    }
        
    \caption{Comparison of filter, mixture, and stitched odometers plotted as functions of $\sum_{m \leq n}\epsilon_m^2$. We set $\delta' = 10^{-6}$ and assume all algorithms being composed are purely differentially private for simplicity. }
    \label{fig:odoms}
\end{figure*}
\begin{figure*}[h!]
    \centering
    \subfloat[New odometers vs. original]{
        \includegraphics[width=0.35\textwidth]{figures/comp.png}
        \label{fig:odom_compare:rogers}
    }
    \subfloat[New odometers vs. pointwise advanced composition]{
        \includegraphics[width=0.35\textwidth]{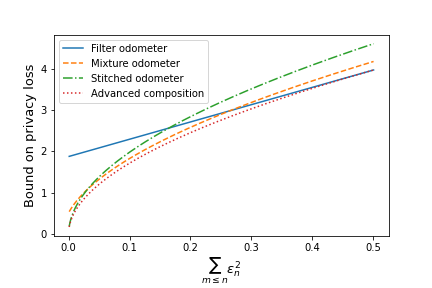}
        \label{fig:odom_compare:adv}
    }
    \caption{Figure~\ref{fig:odom_compare:rogers} compares our odometers to the original. Figure~\ref{fig:odom_compare:adv} compares them with advanced composition optimized point-wise. The curve plotted for advanced composition is valid at any fixed time, but not uniformly over time. Our odometers nevertheless provide a close approximation.}
    \label{fig:odom_compare}
    
\end{figure*}

\begin{lem}[Theorem 6.5 in \citet{rogers2016odometer}]
\label{fact:rogers_odometers}
Assume the same setup as Theorem~\ref{thm:new_odometers}, and fix $\delta = \delta' + \delta''$, where $\frac{1}{e} \geq \delta' > 0$ and $\delta'' \geq 0$. Define the sequence of functions $(u_n^R)_{n \geq 1}$ by
{
\begin{align*}
u_n^R(\epsilon_{1:n}, \delta_{1:n}) := \begin{cases} \infty, &n > N\\
\sqrt{2V_n\left( \log(110e) + 2\log\left(\frac{\log(|x|)}{\delta'}\right)\right)} + \frac{1}{2}V_n &\hspace{-2cm}n \leq N,  V_n \in \left[\frac{1}{|x|^2}, 1\right] \\
\sqrt{2\left(\frac{1}{|x|^2} + V_n\right)\left(1 + \frac{1}{2}\log\left(1 + |x|^2V_n\right)\right)\log\log\left(\frac{4}{\delta'}\log_2(|x|)\right)} & + \frac{1}{2}V_n, \\ &\text{otherwise}
\end{cases},
\end{align*}
}%
where $|x|$ denotes the number of elements in dataset $x$. Then, $(u_n^R)_{n \geq 1}$ is a $\delta$-privacy odometer.

\end{lem}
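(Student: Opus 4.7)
The plan is to largely mimic the proof of Theorem~\ref{thm:new_odometers}, substituting our time-uniform concentration tools with the self-normalized / geometric-stitching bounds that \citet{rogers2016odometer} originally used. The strategy has three steps: reduce to the pure-privacy case via a union bound and stopping time, control the privacy-loss martingale's drift using $\frac{1}{2}V(\epsilon_{1:n})$, and then establish a two-regime time-uniform bound on the martingale matching the two branches of $u_n^R$.

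First I would handle the $\delta''$ contribution. With $N := \inf\{n : \delta'' < \sum_{m\leq n+1}\delta_m\}$ as in the statement, let $E^\ast$ be the event that $|\calL_m(x,x')| \leq \epsilon_m(x)$ for every $m \leq N$. Because each $A_m$ is $(\epsilon_m,\delta_m)$-pDP conditioned on $A_{1:m-1}$, iterated conditioning gives $\P(E^{\ast c}) \leq \sum_{m\leq N}\delta_m \leq \delta''$. On $E^\ast$, the privacy-loss martingale $(M_n)$ from Equation~\eqref{priv_martin} has bounded increments $|\Delta M_n| \leq 2\epsilon_n$, hence $\Delta M_n$ is conditionally $\epsilon_n^2$-subGaussian given $\calF_{n-1}(x)$.

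Second, I would use the expected-privacy-loss bound invoked in the proof of Lemma~\ref{thm:new_filter} (a version of Lemma~\ref{lem:priv_loss_bound}): on $E^\ast$, $\E(\calL_m \mid \calF_{m-1}) \leq \tfrac{1}{2}\epsilon_m^2$, and thus $\calL_{1:n}(x,x') \leq M_n + \tfrac{1}{2}V(\epsilon_{1:n})$. Matching this decomposition against $u_n^R$, it suffices to show that the subGaussian martingale $M_n$ is simultaneously bounded by the remaining square-root terms of $u_n^R$ for all $n \leq N$ with probability at least $1-\delta'$. This is where the two branches in the definition of $u_n^R$ come in: each branch corresponds to a time-uniform bound on $M_n$ valid on the associated intrinsic-time regime, and the two bounds are unioned together with weight $\delta'/2$ each. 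For $V(\epsilon_{1:n}) \in [1/|x|^2, 1]$ I would partition this range into $O(\log |x|)$ geometric buckets $[2^{-k-1}, 2^{-k}]$, on each bucket apply a fixed-time Chernoff bound for the subGaussian sum yielding an $O(\sqrt{V\log(\log|x|/\delta')})$ tail, and union bound across buckets; the constant $110e$ absorbs the stitching slack and the worst-case gap between $V(\epsilon_{1:n})$ and the bucket endpoint. For $V(\epsilon_{1:n})$ outside this range I would invoke a stitched iterated-logarithm bound on the subGaussian martingale (analogous to Theorem~\ref{fact:stitch}) applied to the shifted intrinsic time $\tfrac{1}{|x|^2} + V(\epsilon_{1:n})$, producing the $\sqrt{(\tfrac{1}{|x|^2} + V)(1 + \tfrac{1}{2}\log(1+|x|^2 V))\log\log(\cdot)}$ form in $u_n^R$.

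The main obstacle is bookkeeping rather than conceptual novelty: the specific constants $110e$, $4/\delta'$, and the use of $\log_2(|x|)$ are artifacts of the original stitching choices of \citet{rogers2016odometer}, and reproducing them exactly requires careful tracking of all slack introduced by geometric bucketing and by the conversion of self-normalized tail bounds to a fixed intrinsic-time grid. The underlying machinery, however, is the same subGaussian martingale argument that drives Section~\ref{sec:odometers}: reduce to pDP using the stopping time $N$, separate $\calL_{1:n}$ into drift ($\frac{1}{2}V$) and martingale ($M_n$) parts, and then stitch a subGaussian concentration inequality in the intrinsic time $V(\epsilon_{1:n})$, so no new concentration-theoretic input beyond what is already in the paper is required.
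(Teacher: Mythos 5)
The paper does not prove this lemma. It is cited verbatim as Theorem 6.5 of \citet{rogers2016odometer} and appears in Section~\ref{sec:odometers} solely as a point of comparison, so that the new filter, mixture, and stitched odometers of Theorem~\ref{thm:new_odometers} can be benchmarked against the prior art. There is therefore no in-paper proof to compare your argument against; any proof has to be reconstructed from the original Rogers et al.\ paper, which is exactly what you have set out to do.

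Taken on its own terms, your sketch is a plausible reconstruction of the Rogers et al.\ argument recast in the martingale-plus-drift language that this paper uses for its own odometers. The skeleton (union bound over the $\delta''$ event via the stopping time $N$, drift control by $\tfrac12 V$, then a two-regime time-uniform bound on the remaining martingale, with the first branch coming from geometric bucketing over $V\in[1/|x|^2,1]$ and the second from an iterated-logarithm/stitched bound on shifted intrinsic time) matches the structure of the stated bound. Two points deserve care, though. First, after intersecting with the good event $E^\ast$ (all privacy losses bounded by $\epsilon_m$), the conditional law of $\calL_m$ given $\calF_{m-1}$ is not the same as under pDP alone, so the drift bound $\E(\calL_m\mid\calF_{m-1})\le\tfrac12\epsilon_m^2$ and the subGaussian increment claim both need to be re-established for the conditioned process; Rogers et al.\ do this by working with an explicitly truncated privacy loss rather than conditioning an existing martingale after the fact. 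Second, your line ``$|\Delta M_n|\le 2\epsilon_n$, hence $\epsilon_n^2$-subGaussian'' does not follow from the stated increment bound via Hoeffding; the correct route is that $\calL_n$ itself lies in $[-\epsilon_n,\epsilon_n]$ on the good event, which gives the $\epsilon_n^2$ variance proxy directly. Neither issue is fatal, but both would need to be tightened before the constants $110e$, $4/\delta'$, and $\log_2|x|$ could actually be reproduced, and reproducing those constants exactly was never the point of stating the lemma in this paper.
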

Our new odometers improve over the one presented in Lemma~\ref{fact:rogers_odometers}. First, the above odometer has an explicit dependence on dataset size. In learning settings, datasets are large, degrading the quality of the odometer. Secondly, the tightness of the odometer drops off outside of the interval $\left[ \frac{1}{|x|^2}, 1\right]$. If \textit{any} privacy parameter of an algorithm being composed exceeds $1$, the bound becomes significantly looser. Lastly, and perhaps most simply, the form of the odometer is complicated. Our odometers all have relatively straightforward dependence on the intrinsic time $\sum_{m \leq n}\epsilon_m^2$.

We now examine the rates of all odometers. For simplicity, let $v := \sum_{m \leq n}\epsilon_m^2$. The stitched odometer has a rate of $O(\sqrt{v\log\log(v)})$ in its leading term, asymptotically matching the law of the iterated logarithm \citep{robbins1970statistical} up to constants. Both the original privacy odometer and the mixture odometer have a rate of $O(\sqrt{v\log\left(v\right)})$, demonstrating worse asymptotic performance. The filter odometer has the worst asymptotic performance, growing linearly as $O\left(v\right)$. This does not mean the stitched odometer is the best odometer, since target levels of privacy are often kept small.

To empirically compare odometers, it suffices to consider the setting of \textit{pure} differential privacy, as the odometers identically depend on $(\delta_n)_{n \geq 1}$. Each presented odometer can be viewed as a function of $v$, allowing us to compare odometers by plotting their values for a continuum of $v$. 
Figure~\ref{fig:odom_compare:rogers} shows that there is no clearly tightest odometer. All odometers, barring the original, dominate for some window of values of $v$. While the stitched odometer is asymptotically best, the mixture odometer is tighter for small values of $v$. Likewise, if one knows an approximate target privacy level, the filter odometer is tightest. This behavior is expected from our understanding of martingale concentration~\citep{howard2020line,howard2021unif}: there is no uniformly tightest boundary containing (with probability $1-\delta$) the entire path of a martingale; boundaries that are tight early must be looser later, and vice versa. In fact, we conjecture that our bounds are essentially unimprovable in general --- this conjecture stems from the fact that the time-uniform martingale boundaries employed have error probability \emph{essentially} equal to $\delta$, which in turn stems from the deep fact that for continuous-path (and thus continuous-time) martingales, Ville's inequality (Fact \ref{fact:ville})---that underlies the derivation of these boundaries---holds with exact equality. Since we operate in discrete-time, the only looseness in Ville's inequality stems from lower-order terms that reflect the possibility that at the stopping time, the value of the stopped martingale may not be \emph{exactly} the value at the boundary. 

In Figure~\ref{fig:odom_compare:adv}, we compare our odometers with advanced composition optimized in a point-wise sense for all values of $v$ simultaneously. This boundary \textit{is not a valid odometer}, as advanced composition only holds at a prespecified point in intrinsic time $v$. Our odometers are almost tight with advanced composition for the values of $v$ plotted. Our filter odometer lies tangent to the advanced composition curve, as expected from Section 5.2 of~\citet{howard2020line}. 

\section{Future Directions}
\label{sec:conc}


There are many open problems related to fully adaptive composition. For example, even though privacy filters have been studied under the notion of Gaussian DP~\citep{SmithThakurta22, KTH22}, privacy filters and odometers have not been studied for general $f$-DP \citep{dong2019gaussian}. It also has not been investigated whether adaptivity in privacy parameter selection improves the performance of iterative algorithms such as private SGD. Intuitively, it should be beneficial to let the iterates of an algorithm guide future choices of privacy parameters. Optimal composition results \citep{kairouz2015composition,murtagh2016complexity,zhu2021optimal} have yet to be considered in a setting where privacy parameters are adaptively selected. In Appendix~\ref{sec:alterproof}, we provide another proof of Theorem~\ref{thm:fully_adaptive}, which leverages a reduction of private algorithms to generalized randomized response. Since such a reduction was used in the proofs of \citet{kairouz2015composition} and \citet{murtagh2016complexity}, we believe this proof can be useful for optimal composition with adaptively chosen privacy parameters.


\subsection*{Acknowledgements}

AR acknowledges support from NSF DMS 1916320 and an ARL IoBT CRA grant. Research reported in this paper was sponsored in part by the DEVCOM Army Research Laboratory under Cooperative Agreement W911NF-17-2-0196 (ARL IoBT CRA). The views and conclusions contained in this document are those of the authors and should not be interpreted as representing the official policies, either expressed or implied, of the Army Research Laboratory or the U.S. Government. The U.S. Government is authorized to reproduce and distribute reprints for Government purposes notwithstanding any copyright notation herein. ZSW and JW were supported in part by the NSF CNS2120667, NSF Award \#2120667, a CyLab 2021 grant, a Google Faculty Research Award, and a Mozilla Research Grant. JW acknowledges support from NSF GRFP grants DGE1745016 and DGE2140739.

\bibliography{bib.bib}{}
\bibliographystyle{plainnat}
\appendix
\section{Measure-Theoretic Formalism}
\label{app:notation}

Below, we provide some measure-theoretic formalisms and details regarding datasets and neighboring relations.

\paragraph{Neighboring Datasets:} 
Roughly speaking, an algorithm is differentially private if it difficult to distinguish between output distributions when the algorithm is run on similar inputs. In general, this notion of similarity amongst inputs is defined as a \textit{neighboring relation} $\sim$ between elements on the input space $\calX$. In particular, if two inputs (also referred to as datasets or databases) $x, x' \in \calX$ satisfy the neighboring relation $x \sim x'$, the we say $x$ and $x'$ are \textit{neighbors}. 

There are several canonical examples of neighboring relations on the space of inputs $\calX$. One example is where $\calX = \mathbb{X}^n$ for some data domain $\mathbb{X}$. The data domain can be viewed as the set of all possible individual entries for a dataset, and the space $\mathbb{X}^n$ correspondingly contains all possible $n$ element datasets. In this setting, databases $x, x' \in \calX$ may be considered neighbors if $x$ and $x'$ differ in exactly one entry. Another slightly more general setting is when $\calX = \mathbb{X}^*$, i.e., all possible datasets of finite size. In this situation, the earlier notion of neighboring still makes sense. However, in addition, we may say input datasets $x$ and $x'$ are neighbors if $x$ can be obtained from $x'$ by either adding or deleting an element. This is a very natural notion of neighboring, as under such a relation an algorithm would be differentially private if it were difficult to determine the presence or absence of an individual. Our work is agnostic to the precise choice of neighboring relation. As such, we choose to leave the notion as general as possible.

\paragraph{Algorithms and Random Variables:} We will consider algorithms as randomized mappings $A : \calX \rightarrow \calY$ taking inputs from $\calX$ to some output space $\calY$. To be fully formal, we consider the output space $\calY$ as a \textit{measurable space} $(\calY, \calG)$, where $\calG$ is some $\sigma$-algebra denoting possible events. Recall that a $\sigma$-algebra $\calS$ for a set $S$ is simply a subset of $2^S$ containing $S$ and $\emptyset$ that is closed under countable union, intersection, and complements. When we say $A$ is an algorithm having inputs in some space $\calX$, we really mean $A(x)$ is a $\calY$-valued random variable for any $x \in \calX$. The space $\calX$ need not have an associated $\sigma$-algebra, as algorithm inputs are essentially just indexing devices. Given a sequence of algorithms $(A_n)_{n \geq 1}$,  $(A_n(x))_{n \geq 1}$ is a sequence of $\calY$-valued random variables, for any $x \in \calX$.\footnote{Even if algorithms have different types of outputs (maybe some algorithms have categorical outputs while others output real-valued vectors), $\calY$ can still be made appropriately large to contain all possible outcomes.}

Since we are dealing with the composition of algorithms, we write $A_{1:n}(x)$ as shorthand for the random vector of the first $n$ algorithm outputs, i.e. $A_{1:n}(x) = (A_1(x), \dots, A_n(x))$. Formally, the random vector $A_{1:n}(x)$ takes output values in the product measurable space $(\calY^n, \calG^{\otimes n})$ where $\calG^{\otimes n}$ denotes the $n$-fold product $\sigma$-algebra of $\calG$ with itself. Likewise, since the number of algorithm outputs one views in fully-adaptive composition may be random, if $N$ is a random time (i.e. a $\N$-valued random variable), we will often consider the random vector $A_{1:N}(x) = (A_1(x), \dots, A_N(x))$. 

\paragraph{Filtrations and Stopping Times:} Since privacy composition involves sequences of random outputs, we will use the measure-theoretic notion of a \textit{filtration}. If we have fixed an input $x \in \calX$, we can assume the random sequence $(A_n(x))_{n \geq 1}$ is defined on some probability space $(\Omega, \calF, \P)$. Given such a probability space, a filtration $(\calF_n)_{n \in \N}$ of $\calF$ is a sequence of $\sigma$-algebras satisfying: (i) $\calF_n \subset \calF_{n + 1}$ for all $n \in \N$, and (ii) $\calF_n \subset \calF$ for all $n \in \N$. Given an arbitrary $\calY$-valued discrete-time stochastic process $(X_n)_{n \geq 1}$, it is often useful to consider the \textit{natural filtration} $(\calF_n)_{n \in \N}$ given by $\calF_n := \sigma(X_m : m\leq n)$ and $\calF_0 = \{\emptyset, \Omega\}$. Intuitively, a filtration formalizes the notion of accumulating information over time. In particular, in the context of the natural filtration generated by a stochastic process, the $n$th $\sigma$-algebra in the filtration $\calF_n$ essentially represents the entirety of information contained in the first $n$ random variables. In other words, if one is given $\calF_n$, they would know all possible events/outcomes that could have occurred up to and including timestep $n$.

Lastly, we briefly mention the notion of a \textit{stopping time}, as this measure-theoretic object is necessary to define privacy filters. Given a filtration $(\calF_n)_{n \in \N}$, a random time $N$ is said to be a stopping time with respect to $(\calF_n)_{n \in \N}$ if, for any $n$, the event $\{N \leq n\} \in \calF_n$. In words, a random time $N$ is a stopping time if given the information in $\calF_n$ we can determine whether or not we should have stopped by time $n$. Stopping times are essential to the study of fully-adaptive composition, as a practitioner of privacy will need to use the adaptively selected privacy parameters to determine whether or not to stop interacting with the underlying sensitive database. 

\section{Martingale Inequalities}
\label{app:concentration}

In this appendix, we provide a thorough exposition into the concentration inequalities leveraged in this paper. First, at the heart of supermartingale concentration is Ville's inequality \citep{ville1939etude}, which can be viewed as a time-uniform version of Markov's inequality.

\begin{lem}[Ville's Inequality \citep{ville1939etude}]
\label{fact:ville}
Let $(X_n)_{n \in \N}$ be a nonnegative supermartingale with respect to some filtration $(\calF_n)_{n \in \N}$. Then, for any confidence parameter $\delta \in (0, 1)$, we have 
\(
\P\left(\exists n \in \N : X_n \geq \frac{\E X_0}{\delta}\right) \leq \delta.
\)
\end{lem}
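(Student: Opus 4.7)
The plan is to reduce the time-uniform statement to an ordinary one-sided Markov-type bound applied at a stopping time, via the optional stopping theorem for the stopped supermartingale. First, I would handle the trivial case $\E X_0 = 0$: since $X_n \geq 0$ and the supermartingale property gives $\E X_n \leq \E X_0 = 0$, we have $X_n = 0$ almost surely for every $n$, so the event on the left is empty (note $\E X_0/\delta = 0/\delta$ requires a tiny care, but replacing the threshold by any $c > 0$ and letting $c \downarrow 0$ resolves this). Henceforth assume $\E X_0 > 0$, and set $c := \E X_0 / \delta$.

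Next I would define the stopping time
\[
\tau := \inf\{n \in \N : X_n \geq c\},
\]
with the convention $\inf \emptyset = +\infty$. Because $\{X_n \geq c\} \in \calF_n$, the random variable $\tau$ is indeed a stopping time with respect to $(\calF_n)_{n \in \N}$. For each fixed $n$, consider the stopped process $Y_n := X_{\tau \wedge n}$. A standard fact is that stopping a supermartingale at a bounded stopping time $\tau \wedge n$ preserves the supermartingale property, and clearly $Y_n \geq 0$. Consequently $\E Y_n \leq \E Y_0 = \E X_0$.

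Now I would apply Markov's inequality to $Y_n$. On the event $\{\tau \leq n\}$, we have $Y_n = X_\tau \geq c$ by definition of $\tau$, so
\[
\P(\tau \leq n) \;\leq\; \P(Y_n \geq c) \;\leq\; \frac{\E Y_n}{c} \;\leq\; \frac{\E X_0}{c} \;=\; \delta.
\]
Since the events $\{\tau \leq n\}$ are monotone increasing in $n$ with union $\{\tau < \infty\}$, continuity of probability from below yields $\P(\tau < \infty) \leq \delta$. Finally, $\{\exists n \in \N : X_n \geq c\} = \{\tau < \infty\}$, giving the claim.

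The only subtle point I expect is verifying that the stopped process $Y_n$ inherits the supermartingale property and has $\E Y_n \leq \E X_0$; this is routine but worth being explicit about, since it is precisely where the supermartingale assumption is used. Everything else is bookkeeping: defining $\tau$, applying Markov at each fixed $n$, and using monotone continuity of $\P$ to pass from the pointwise bound to the time-uniform bound.
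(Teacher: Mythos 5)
The paper does not include a proof of this lemma; it is stated and cited to Ville (1939) as a classical result. Your argument is the standard and correct proof: stop the process at the first time it reaches the threshold, use the fact that a stopped supermartingale is a supermartingale to get $\E X_{\tau\wedge n}\le \E X_0$, apply Markov's inequality at each fixed $n$, and pass to the time-uniform claim via monotone continuity of $\P$. One small remark on your edge case: when $\E X_0=0$ the event $\{\exists n: X_n\ge 0\}$ is the whole space (not empty), so the displayed bound is actually vacuously false as written in that degenerate case; the fix you indicate (take any $c>0$, conclude $\P(\exists n: X_n\ge c)=0$, and hence $\sup_n X_n=0$ a.s.) is the right one, and this is a well-known irrelevant degeneracy of the standard phrasing.
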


We do not directly leverage Ville's inequality in this work, but all inequalities we use can be directly proven from Lemma~\ref{fact:ville} \citep{howard2020line, howard2021unif}. In short, each inequality in this supplement is proved by carefully massaging a martingale of interest into a non-negative supermartingale.

Another useful tool we will leverage is Doob's optional stopping theorem.

\begin{lem}[Optional stopping theorem \citep{DU04}]
\label{fact:optstop}
Let $(X_n)_{n \in \N}$ be a nonnegative supermartingale with respect to some filtration $(\calF_n)_{n \in \N}$. Then
$\E\left[ X_\tau \right]  \leq \E\left[ X_0 \right]$ for all stopping times $\tau$ that are potentially infinite.
\end{lem}

For our alternative proof of the privacy filter (in Section \ref{sec:alterproof}), we leverage the following special case of a recent advance in time-uniform martingale concentration~\citep{howard2020line}. The following Theorem~\ref{fact:line_cross} is just a special case of the main result in \citet{howard2020line}, and we include the proof for completeness. When we say a random variable $X$ is $\sigma^2$-subGaussian conditioned on some sigma-algebra $\calG$, we mean that, for all $\lambda \geq 0$,
$$
\E\left(e^{\lambda X} \mid \calG\right) \leq e^{\lambda^2\sigma^2/2}.
$$
In particular, if $X$ is $\sigma^2$-subGaussian as above, this does not  imply that $-X$ is $\sigma$-subGaussian (because the condition is only assumed for $\lambda \geq 0$). In general, $X$ can have different behaviors in its left and right tail, see for example the discussion of the differing tails of the empirical variance of Gaussians in~\citet{howard2021unif}.

\begin{thm}
\label{fact:line_cross}
Let $(M_n)_{n \in \N}$ be a martingale with respect to some filtration $(\calF_n)_{n \in \N}$ such that $M_0 = 0$ almost surely. Moreover, let $(\sigma_n)_{n \geq 1}$ be a $(\calF_n)_{n \in \N}$-predictable sequence of random variables such that, conditioned on $\calF_{n - 1}$, $\Delta M_n := M_n - M_{n - 1}$ is $\sigma_n^2$-subGaussian.
Define $V_n := \sum_{m \leq n}\sigma_m^2$. Then, we have, for all $a, b > 0$, 
$$
\P\left(\exists n \in \N : M_n \geq \frac{b}{2} + \frac{b}{2a}V_n\right) \leq \exp\left(\frac{-b^2}{2a}\right).
$$

\end{thm}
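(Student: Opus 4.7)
The plan is to follow the classical exponential-supermartingale recipe and then apply Ville's inequality (Fact~\ref{fact:ville}) to a carefully chosen mixture/point parameter $\lambda$. Concretely, for any fixed $\lambda > 0$, I would define the process
\begin{equation*}
L_n(\lambda) := \exp\!\left(\lambda M_n - \tfrac{\lambda^2}{2} V_n\right), \qquad L_0(\lambda) = 1.
\end{equation*}
The sub-Gaussianity assumption on the increments gives, for every $n \geq 1$,
\begin{equation*}
\E\!\left[\exp(\lambda \Delta M_n) \mid \calF_{n-1}\right] \leq \exp\!\left(\tfrac{\lambda^2 \sigma_n^2}{2}\right),
\end{equation*}
and since $V_n - V_{n-1} = \sigma_n^2$ is $\calF_{n-1}$-measurable, a direct computation shows $\E[L_n(\lambda) \mid \calF_{n-1}] \leq L_{n-1}(\lambda)$. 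Thus $(L_n(\lambda))_{n \in \N}$ is a nonnegative supermartingale starting at $1$.

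The second step is to recognize the linear boundary $\tfrac{b}{2} + \tfrac{b}{2a} V_n$ as a level set of $L_n$ for a specific $\lambda$. I would choose $\lambda^* = b/a$; multiplying the event $M_n \geq \tfrac{b}{2} + \tfrac{b}{2a} V_n$ by $\lambda^* > 0$ gives
\begin{equation*}
\lambda^* M_n \;\geq\; \tfrac{b^2}{2a} + \tfrac{b^2}{2a^2} V_n \;=\; \tfrac{b^2}{2a} + \tfrac{(\lambda^*)^2}{2} V_n,
\end{equation*}
so the event is equivalent to $L_n(\lambda^*) \geq \exp(b^2/(2a))$. Applying Ville's inequality to the nonnegative supermartingale $L_n(\lambda^*)$ with threshold $\exp(b^2/(2a))$ and $\E L_0(\lambda^*) = 1$ immediately yields
\begin{equation*}
\P\!\left(\exists n \in \N : M_n \geq \tfrac{b}{2} + \tfrac{b}{2a} V_n\right) \;=\; \P\!\left(\exists n : L_n(\lambda^*) \geq e^{b^2/(2a)}\right) \;\leq\; e^{-b^2/(2a)}.
\end{equation*}

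The only genuinely substantive move is choosing $\lambda^*$ so that the linear-in-$V_n$ boundary exactly matches the quadratic term in the log-exponential supermartingale; everything else is bookkeeping. No difficult obstacle should arise, since the sub-Gaussian MGF bound, predictability of $V_n$, and Ville's inequality combine cleanly. If one wished to simultaneously optimize over $(a,b)$ (e.g., to derive mixture or stitched boundaries later in the paper), one would instead integrate $L_n(\lambda)$ against a prior on $\lambda$, but for this line-crossing statement the point choice $\lambda^* = b/a$ suffices.
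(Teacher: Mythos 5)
Your proof is correct and follows essentially the same route as the paper: construct the exponential process $\exp(\lambda M_n - \tfrac{\lambda^2}{2}V_n)$ with $\lambda = b/a$ (which is exactly the paper's $X_n$), verify it is a nonnegative supermartingale via the conditional sub-Gaussian MGF bound and predictability of $V_n$, and apply Ville's inequality with threshold $e^{b^2/(2a)}$. The only cosmetic difference is that you parameterize by $\lambda$ first and then substitute $\lambda^\ast = b/a$, while the paper writes the process directly in terms of $a$ and $b$.
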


\begin{proof}[\textbf{Proof of Theorem~\ref{fact:line_cross}}]
Let $(M_n)_{n \in \N}$ be the martingale listed in the theorem statement. Observe that, for any $a, b > 0$, the process $(X_n)_{n \in \N}$ given by
$$
X_n := \exp\left(\frac{b}{a}M_n - \frac{b^2}{2a^2}\sum_{m \leq n}\sigma_m^2\right)
$$
is a non-negative supermartingale. As such, applying Ville's inequality (Lemma~\ref{fact:ville}) yields
$$
\P\left(\exists n \in \N : X_n > \exp\left(\frac{b^2}{2a}\right)\right) \leq \exp\left(-\frac{b^2}{2a}\right).
$$
Now, on such event, taking logs and rearranging yields
$$
\frac{b}{a}M_n \leq \frac{b^2}{2a} + \frac{b^2}{2a^2}\sum_{m \leq n}\sigma_m^2.
$$
Multiplying both sides by $\frac{a}{b}$ finishes the proof.\end{proof}

The predictable process $(V_n)_{n \in \N}$ is a proxy for the accumulated variance of $(M_n)_{n \in \N}$ up to any fixed point in time. In particular, the process $(V_n)_{n \in \N}$ can be thought of as yielding the ``intrinsic time" of the process. The free parameters $a$ and $b$ thus allow us to optimize the tightness of the boundary for some intrinsic moment in time. This is ideal for us, as, for the sake of composition, the target privacy parameter $\epsilon$ can guide us in finding a point in intrinsic time (that is, in terms of the process $(V_n)_{n \in \N}$) to optimize for. We discuss how to apply this inequality to prove privacy composition results both in this supplement and in Section~\ref{sec:filter}.

We also leverage the following martingale inequalities from \citet{howard2021unif} in Section~\ref{sec:odometers}, where we construct various families of time-uniform bounds on privacy loss in fully-adaptive composition. These inequalities take on a more complicated form than Theorem~\ref{fact:line_cross}, but we explain the intuition behind them in the sequel. The first bound we present relies on the method of mixtures for martingale concentration, which stems back to Robbins' work in the 1970s \citep{robbins1970statistical}. There are many good resources providing an introduction to the method of mixtures \citep{delapena2007maximization, kaufmann2021mixture, howard2021unif}.

\begin{thm}
\label{fact:mixture}
Let $(M_n)_{n \in \N}$ be a martingale with respect to some filtration $(\calF_n)_{n \in \N}$ such that $M_0 = 0$ almost surely. Moreover, let $(\sigma_n)_{n \geq 1}$ be a $(\calF_n)_{n \in \N}$-predictable sequence of random variables such that, conditioned on $\calF_{n - 1}$, $\Delta M_n := M_n - M_{n - 1}$ is $\sigma_n^2$-subGaussian.
Define $V_n := \sum_{m \leq n}\sigma_m^2$ and choose a tuning parameter $\gamma > 0$. Then, for any $\delta > 0$, we have
$$
\P\left(\exists n \in \N : M_n \geq \sqrt{2(V_n + \gamma)\log\left(\frac{1}{\delta}\sqrt{\frac{V_n + \gamma}{\gamma}}\right)}\right) \leq \delta.
$$

\end{thm}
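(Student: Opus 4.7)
The plan is to use the classical method of mixtures for martingale concentration, following the approach of \citet{robbins1970statistical}. The proof has four steps and is essentially a consequence of Ville's inequality (Lemma~\ref{fact:ville}) applied to a well-chosen non-negative supermartingale built by averaging a one-parameter family of exponential supermartingales.

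First I would construct, for each fixed $\lambda \in \R$, the exponential process $L_n(\lambda) := \exp(\lambda M_n - \tfrac{\lambda^2}{2} V_n)$. Using the two-sided conditional $\sigma_n^2$-subGaussianity of the increments $\Delta M_n$, the standard computation $\E[\exp(\lambda \Delta M_n) \mid \calF_{n-1}] \le \exp(\lambda^2 \sigma_n^2/2)$ shows $L_n(\lambda)$ is a non-negative supermartingale with $L_0(\lambda) = 1$. Next, I would mix these supermartingales against the centered Gaussian prior with density $f(\lambda) := \sqrt{\rho/(2\pi)}\exp(-\rho \lambda^2/2)$. By Tonelli, the mixture $\bar L_n := \int_{\R} L_n(\lambda) f(\lambda) \, d\lambda$ is itself a non-negative supermartingale with $\E \bar L_0 = 1$. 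A direct Gaussian integration (completing the square in $\lambda$ against the effective precision $V_n + \rho$) yields the closed form
$$\bar L_n = \sqrt{\frac{\rho}{V_n + \rho}} \, \exp\!\left(\frac{M_n^2}{2(V_n + \rho)}\right).$$

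Third, I would apply Ville's inequality to $\bar L_n$ at confidence level $\delta$, which gives $\P\left(\exists n \in \N : \bar L_n \ge 1/\delta\right) \le \delta$. On the complement, solving $\bar L_n < 1/\delta$ for $|M_n|$ and noting that $\{M_n \ge t\} \subset \{|M_n| \ge t\}$ yields, uniformly in $n$,
$$M_n < \sqrt{2(V_n + \rho)\log\!\left(\tfrac{1}{\delta}\sqrt{(V_n + \rho)/\rho}\right)},$$
which is exactly the bound in the statement.

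The only nontrivial step is the Gaussian integral and verifying that $\bar L_n$ remains a non-negative supermartingale after mixing; this follows from an application of Tonelli since $L_n(\lambda) \ge 0$ and the prior is a probability density. The interpretation of $\rho$ is as a tuning parameter that controls where on the intrinsic time axis $V_n$ the boundary is tightest: small $\rho$ tightens the bound for small $V_n$ at the cost of a heavier iterated-logarithmic penalty later, while large $\rho$ flattens the curve. This flexibility is precisely what will be exploited in the construction of the mixture odometer in Theorem~\ref{thm:new_odometers}.
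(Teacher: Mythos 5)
Your proof is correct and self-contained: the construction of the exponential supermartingales $L_n(\lambda)$, the Gaussian mixing against the prior with precision $\rho$, the closed-form evaluation $\bar L_n = \sqrt{\rho/(V_n+\rho)}\exp\bigl(M_n^2/(2(V_n+\rho))\bigr)$, and the application of Ville's inequality all go through, and the two-sided sub-Gaussianity hypothesis is exactly what is needed since the Gaussian prior puts mass on $\lambda<0$. The paper itself does not re-prove this theorem --- it states it as a specialization of results from \citet{howard2021unif} and explicitly notes that it ``relies on the method of mixtures'' tracing back to Robbins --- so your derivation is precisely the argument the paper alludes to, spelled out in full rather than cited.
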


The next inequality relies on the recent technique of boundary stitching, first presented in \citet{howard2021unif}. Intuitively, the technique works by breaking intrinsic time --- that is, time according to the accumulated variance process $(V_n)_{n \in \N}$ --- into roughly geometrically spaced pieces. Then, one optimizes a tight-boundary in each region and takes a union bound. The actual details are more technical, but are not needed in this work. 

\begin{thm}
\label{fact:stitch}
Let $(M_n)_{n \in \N}$ be a martingale with respect to $(\calF_n)_{n \in \N}$ such that $M_0 = 0$ almost surely. Moreover, let $(\sigma_n)_{n \geq 1}$ be a $(\calF_n)_{n \in \N}$-predictable sequence of random variables such that, conditioned on $\calF_{n - 1}$, both $\Delta M_n := M_n - M_{n - 1}$ and $-\Delta M_n$ are $\sigma_n^2$-subGaussian.
Define $V_n := \sum_{m \leq n}\sigma_m^2$ and choose a starting intrinsic time $v_0 > 0$. Then, for any $\delta \in (0, 1)$, we have
{
\begin{align*}
\P\Bigg(\exists n \in \N : M_n \geq 1.7\sqrt{V_n\left(\log\log\left(\frac{2V_n}{v_0}\right) + .72\log\left(\frac{5.2}{\delta}\right)\right)} 
\quad\text{and} \quad V_n \geq v_0\Bigg) \leq \delta.
\end{align*}
}%
\end{thm}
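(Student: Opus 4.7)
The plan is to build the stitched boundary out of the linear crossing inequality from Theorem~\ref{fact:line_cross}, partitioning intrinsic time into geometrically spaced pieces and taking a carefully weighted union bound. Concretely, I would fix a geometric ratio $\eta > 1$ (something like $\eta = 2$ is natural, but the exact value is an optimization parameter) and define the intrinsic time slabs $I_k := [v_0 \eta^k, v_0 \eta^{k+1})$ for $k = 0, 1, 2, \dots$. Since $V_n$ is nondecreasing and predictable, every sample path with $V_n \geq v_0$ lies in some $I_k$ at time $n$, so it suffices to control the event $\{\exists n : M_n \geq u(V_n),\ V_n \in I_k\}$ for each $k$ and then union-bound.

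Within a single slab $I_k$, I would apply Theorem~\ref{fact:line_cross} (and its two-sided analog via $-M_n$, using the two-sided subGaussian assumption on the increments $\Delta M_n$) with parameters $a_k, b_k$ tuned so that the linear boundary $\tfrac{b_k}{2} + \tfrac{b_k}{2a_k} V_n$ is tangent to a target curvature-type curve $C \sqrt{V_n \log(1/\delta_k)}$ near the geometric midpoint $\sqrt{v_0^2 \eta^{2k+1}}$ of the slab. A line tangent to such a square-root curve at variance level $v$ has slope $\asymp \sqrt{\log(1/\delta_k)/v}$ and intercept $\asymp \sqrt{v \log(1/\delta_k)}$, and because the slab has bounded ratio $\eta$, the linear tangent stays within a small multiplicative constant of the target square-root boundary throughout $I_k$. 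This gives a per-slab bound of the form $\P(\exists n \in I_k : M_n \geq C_\eta \sqrt{V_n \log(1/\delta_k)}) \leq \delta_k$, where $C_\eta$ depends only on $\eta$.

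The $\log\log$ term then arises from the union bound allocation. I would set $\delta_k = \delta \cdot \zeta^{-1}(s) (k+1)^{-s}$ for some $s > 1$ (e.g.\ $s = 1.4$, which is the kind of value that drives the constants $1.7$, $0.72$, $5.2$), so that $\sum_k \delta_k \leq \delta$. On the event in slab $I_k$, we have $k \leq \log_\eta(V_n/v_0)$, so $\log(1/\delta_k) \leq s \log(k+1) + \log(1/\delta) + \log \zeta(s) \leq s \log\log_\eta(2 V_n/v_0) + \log(1/\delta) + \log \zeta(s) + O(1)$. Substituting this into the per-slab bound gives a uniform bound of the shape $C_\eta \sqrt{V_n(\alpha \log\log(2V_n/v_0) + \beta \log(c/\delta))}$ for explicit $\alpha, \beta, c$ depending on $(\eta, s)$. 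Finally I would numerically optimize over $(\eta, s)$ to drive the leading constant down to $1.7$ and the auxiliary constants to $0.72$ and $5.2$.

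The main obstacle is not conceptual but quantitative: the result is stated with explicit small constants ($1.7$, $0.72$, $5.2$), and recovering them requires simultaneously picking the geometric base $\eta$, the polynomial decay exponent $s$, and the tangency point within each slab, then verifying that the induced linear boundary lies above the target square-root curve uniformly on $I_k$, not just at the tangency point. This is a short but tight multi-parameter optimization, and it is also where one pays for the fact that we only have subGaussian increments (rather than, say, subexponential) and a two-sided boundary (via symmetrizing with $-M_n$, which costs at most a factor of two in the failure probability and must be absorbed into $\delta_k$). Ville's inequality (Lemma~\ref{fact:ville}) applied to the sub-Gaussian exponential supermartingale used in the proof of Theorem~\ref{fact:line_cross} is the only probabilistic input; everything else is deterministic bookkeeping.
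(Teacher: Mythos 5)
Your sketch captures the essence of the ``stitching'' technique from \citet{howard2021unif}, but note that the paper does not actually prove Theorem~\ref{fact:stitch}: it imports the result from \citet{howard2021unif} after plugging in specific parameter choices to simplify the expression (the constants $1.7$, $0.72$, $5.2$ correspond to roughly $\eta \approx 2$ for the geometric ratio and $s \approx 1.4$ for the polynomial decay exponent in their general ``polynomial stitched boundary''). So there is no internal proof to compare against, only the informal description in Appendix~B that matches your geometric-slab-plus-union-bound plan. Your reconstruction is sound at a high level --- partition intrinsic time into slabs $[v_0\eta^k, v_0\eta^{k+1})$, on each slab apply Theorem~\ref{fact:line_cross} with $(a_k, b_k)$ tuned so the line is tangent to the target curve at (roughly) the geometric midpoint, allocate the failure budget as $\delta_k \propto (k+1)^{-s}$, and convert $k$ back into $\log\log(V_n/v_0)$ --- and you are right that the only probabilistic input is Ville's inequality on the exponential sub-Gaussian supermartingale.

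One small but real slip: the conclusion of Theorem~\ref{fact:stitch} is a \emph{one-sided} crossing bound (the event is $M_n \geq \cdots$, not $|M_n| \geq \cdots$), so you do not need to symmetrize with $-M_n$ and the factor of two you mention would be wasted; building it in would push your constants above the claimed $1.7$, $0.72$, $5.2$. The two-sided sub-Gaussian hypothesis is there because the paper uses the same assumption set for Theorem~\ref{fact:mixture} and because the privacy-loss increments in the application genuinely are two-sided sub-Gaussian, but the stitched boundary itself only needs the one-sided condition. Beyond that, your assessment that the remaining work is a tight multi-parameter optimization (over $\eta$, $s$, and the tangency point, verifying the tangent line dominates the square-root curve throughout each slab) is exactly right, and is precisely what \citet{howard2021unif} carry out to produce the closed-form constants that the paper then quotes.
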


Note that the original version of Theorem~\ref{fact:stitch} as found in \citet{howard2021unif} has more free parameters to optimize over, but we have already simplified the expression to make the result more readable. The free parameter $v_0 > 0$ in the above boundary gives the intrinsic time at which the boundary becomes non-trivial (i.e., the tightest available upper bound before $V_n \geq v_0$ is $\infty$). 

We qualitatively compare these bounds in Section~\ref{sec:odometers}, wherein we construct various time-uniform bounds on privacy loss processes. For now, Theorem~\ref{fact:line_cross} can be thought of as providing a tight upper bound on a martingale at a single point in intrinsic time, providing loose guarantees elsewhere. On the other hand, Theorems~\ref{fact:mixture} and \ref{fact:stitch} provide decently tight control over a martingale at all points in intrinsic time simultaneously, although at the cost of sacrificing tightness at any given fixed point.

\ifarXiv
\section{Details in Proof of Approx-zCDP Filter}

\subsection{Equivalence of Approximate zCDP Definitions}
\label{sec:equiv}
We will show that our definition of approximate zCDP is equivalent to the original definition of approximate zCDP due to \citet{bun2016concentrated}. Let us first restate their definition as a condition on a private algorithm $A$.

\begin{condition}[Original definition of \citet{bun2016concentrated}]\label{def:original}
For any neighboring datasets $x, x'$, there exist events $E$ and $E'$ such that {for all} $\lambda\geq 1$,
\begin{align*}
&D_\lambda(A(x) \mid E \| A(x') \mid E') \leq \rho\lambda, \\
&D_\lambda(A(x') \mid E' \| A(x) \mid E) \leq \rho \lambda,\\
&\P(A(x) \in E) \geq 1 - \delta, \text{ and}\\ &\P(A(x') \in E') \geq 1 - \delta.
\end{align*}
\end{condition}

Our definition is adapted from the approximate R\'enyi differential privacy definition due to \citet{Papernot022}. We restate the (unconditional) definition below.

\begin{condition}[Adapted from \citet{Papernot022}]\label{def:convex}
For any neighboring datasets $x, x'$, there exist distributions $P', P'', Q', Q''$ such that the outputs are distributed according to the following mixture distributions:
\[
 A(x)  \sim (1 - \delta)P' + \delta P'' ,\qquad A(x') \sim (1 - \delta)Q' + \delta Q''
\]
with $\text{for all }\lambda\geq 1,~ D_\lambda(P' \| Q') \leq \rho \lambda$ and $D_\lambda(P' \| Q') \leq \rho \lambda$.
\end{condition}

\begin{thm}
\label{thm:equiv}
Conditions \ref{def:original} and \ref{def:convex} are equivalent.
\end{thm}
\begin{proof}[\textbf{Proof of Theorem~\ref{thm:equiv}}]
Fix any neighbors $x, x'$. Suppose an algorithm $A$ satisfies Condition \ref{def:original} for some events $E, E'$. Then we could let $P'$ and $Q'$ be the conditional distributions $\P(A(x) \in \cdot \mid A(x) \in E)$ and $\P(A(x') \in \cdot \mid  A(x') \in E')$ respectively. Then let 
\begin{align*}
P''(\cdot) &= \frac{1}{\delta}\Big( \P(A(x) \in \cdot \mid A(x) \in E^c) \P(A(x) \in E^c) \\
&+ P'(\cdot) \ (\P(A(x) \in E) - (1 - \delta))\Big), \\
Q''(\cdot)  &= \frac{1}{\delta}\Big( \P(A(x') \in \cdot \mid A(x') \in E'^c)\P(A(x') \in E'^c) \\
&+ Q'(\cdot) \ (\P(A(x')\in E') - (1 - \delta))\Big)    .
\end{align*}
Then $A(x)$ is distributed according to the mixture $(1- \delta)P' + \delta P''$, and $A(x')$ is distributed according to the mixture $(1- \delta)Q' + \delta Q''$. Thus, $A$ also satisfies condition \ref{def:convex} given that $D_\lambda(P' \| Q') \leq \lambda \rho$ and 
$D_\lambda(Q' \| P') \leq \lambda \rho$ by our assumption of Condition \ref{def:original}.

Now suppose $A$ satisfies Condition \ref{def:convex} for some pairs of distributions $(P', P'')$ and $(Q', Q'')$. Then we can view the output distribution of $A(x)$ as generating a Bernoulli random variable $C$ such that with probability $(1 - \delta)$, $C=1$ and $A(x)$ draws an outcome from $P'$ and with probability $C=0$ and $A(x)$ draws an outcome from $P''$. Similarly, we can view $A(x')$ as flipping a coin $C'$ such that $A(x')$ draws an outcome from $Q'$ when $C'=1$. Then letting the events $E$ be all the randomness of $A(x)$ such that $C=1$ and $E'$ be all the randomness of $A(x')$ such that $C'=1$ satisfies condition \ref{def:original}.
\end{proof}



\subsection{Missing Proofs}
The following proof technique was used in prior works, including \cite{cesar2021bounding, feldman2020individual}
\begin{lem}\label{claim:nsm}
Let $(M_n^{(\lambda)})_{n \geq 1}$ be as defied in Equation \eqref{process2}. Then, $(M_n^{(\lambda)})_{n \geq 1}$ is a non-negative supermartingale with respect to its natural filtration $(\calF'_n)_{n \geq 1}$ given by $\calF_n := \sigma(Y_m' : m \leq n)$.
\end{lem}

\begin{proof}
 For any $k \geq 1$,
\begin{align*}
 \E[M_{n}^{(\lambda)} \mid \calF'_{n-1}] & = \E \Bigg[M_{n-1}^{(\lambda)} \, \exp\Bigg((\lambda - 1) \log\left(\frac{P'_{n}(Y_n' \mid Y_{1:n-1}' )}{Q'_{n}(Y_{n}' \mid Y_{1:n-1}' )}\right) - \lambda (\lambda - 1) \rho_{n}(Y_{1:n-1}')\Bigg)\mid \calF'_{n-1}\Bigg]\\
&= M_{n-1}^{(\lambda)}\, \E\left[\left(\frac{P'_{n}(Y_n' \mid Y_{1:n-1}' )}{Q'_{n}(Y_n' \mid Y_{1:n-1}' )}\right)^{(\lambda - 1)} \mid \calF'_{n-1}\right] \cdot \exp(-\lambda (\lambda - 1) \rho_{n}(Y'_{1:n-1}))\\
 &\leq M_{n-1}^{(\lambda)} \,  \exp(\lambda (\lambda - 1) \rho_{n}(Y_{1:n-1}')) \, \exp(-\lambda (\lambda - 1) \rho_{n}(Y_{1:n-1}')) \\
 &= M_{n-1}^{(\lambda)},
\end{align*}
where the last inequality follows from the R\'[enyi divergence bound due to approximate zCDP.
\end{proof}

\begin{lem}\label{lem:convex-combo-proof}
Let the distributions $P_{1:n}, Q_{1:n}, P'_{1:n}, Q'_{1:n}$ be defined in \eqref{eq:P}, \eqref{eq:Q}  for any $n\geq 1$. Then there exists distributions $P''_{1:n}$ and $Q''_{1:n}$ such that
\begin{align*}
    P_{1:n} = (1 - \delta) P'_{1:n} + \delta P''_{1:n},\\
        Q_{1:n} = (1 - \delta) Q'_{1:n} + \delta Q''_{1:n}.\\
\end{align*}
\end{lem}

\begin{proof}
We will show the decomposition for $P_{1:n}$, and the proof follows identically for the decomposition of $Q_{1:n}$. First, we can express $P_{1:n}(y_1, \cdots, y_n)$ for any $y_1, \cdots y_n$ as follows:
\begin{align*}
 P_{1:n}(y_1, \cdots, y_n) &= \prod_{m=1}^{n} P_m(y_m \mid y_{1:m-1}) \\
 & = \prod_{m = 1}^{n} \big[(1 - \delta_m(y_{1:m-1})) P'_{m}(y_m \mid y_{1:m-1})+ \delta_m(y_{1:m-1}) P''_{m}( y_m \mid y_{1:m-1}) \big]\\
 &= \sum_{S\subseteq [n] } \underbrace{\left(\prod_{m\in S} \delta_m(y_{1:m-1}) \prod_{m \in S^c}(1 - \delta_m(y_{1:m-1}))\right)}_{w_S(y_{1:m})} \cdot \underbrace{\prod_{m\in S}P''_m( y_m \mid y_{1:m-1})  \prod_{m\leq n, m\notin S} P'_m(y_m \mid y_{m-1})}_{f_S(y_{1:m})}
 \end{align*}
It suffices to show that $w_\emptyset(y_{1:m}) \geq 1 - \delta$ for all $y_{1:m}$.   To see this, we have the following by assumption 
$$w_\emptyset = \prod_{m\leq n} (1 - \delta_m(y_{1:m-1})) \geq 1 - \sum_{m\leq n} \delta_m(y_{1:m-1}) \geq 1 - \delta.$$
\end{proof}
\else
\section{Details in Proof of Approx-zCDP Filter}

\swdelete{
Now we consider privacy filter for (conditional) approximate zCDP. The Renyi divergence from $P$ to $Q$ of order $\lambda \geq 1$ is defined as
\[
D_\lambda(P \| Q) := \frac{1}{\lambda - 1} \log\left( \E_{Y\sim P} \left[ \left( \frac{P(Y)}{Q(Y)}\right)^{\lambda - 1} \right] \right).
\]}
\swdelete{\begin{definition}[Conditional approximate zCDP]
Supppose $A$ and $B$ are algorithms with inputs in space $\calX$ and outputs in measurable spaces $(\calY, \calG)$ and $(\calZ, \calH)$. Suppose $\delta, \rho: \calZ \rightarrow \R_{\geq 0}$ are measurable. We say the algorithm $A$ is $\delta$-approximate  $\rho$-zCDP conditioned on $B$ if, for any neighboring datasets $x, x'$, there exist distributions $P', P'', Q', Q''$ such that the conditional output distributions have the following form:
\[
 A(x) \mid B(x) \sim (1 - \delta(B(x)))P' + \delta P'' ,\qquad A(x')\mid B(x) \sim (1 - \delta(B(x))Q' + \delta Q''
\]
with
$$
\text{for all }\lambda\geq 1, \qquad D_\lambda(P' \| Q') \leq \rho(B(x)) \lambda.
$$
For succinctness, we will write $\rho(x)$ for $\rho(B(x))$ and $\delta(x)$ for $\delta(B(x))$.
\end{definition}

\rrcomment{Perhaps just a footnote about how this definition is slightly different than what has appeared before.  Specifically, note that this definition differs from what is presented in \citep{bun2016concentrated}, and is closer to the approximate RDP definition in  \citep{Papernot022}.  Although the definition in \citep{Papernot022} is defined in terms of an $\inf$, we use the version that explicitly gives states that there exists a decomposition of the distributions of $A$ and $B$.  We believe these definitions to all be the same.}

\rrcomment{Since this definition is not exactly the same as before, can we really say that approx DP implies approx zCDP and vice versa to then say that going this route proves the approx filters result earlier in the paper?  I know this is being pedantic here.  If we can convince ourselves that one implies the other using the exact same analysis as with the other definition, I am fine saying that here, but we should first convince ourselves.}}

\subsection{Equivalence of Approximate zCDP Definitions}
\label{sec:equiv}
We will show that our definition of approximate zCDP is equivalent to the original definition of approximate zCDP due to \citet{bun2016concentrated}. Let us first restate their definition as a condition on a private algorithm $A$.

\begin{condition}[Original definition of \citet{bun2016concentrated}]\label{def:original}
For any neighboring datasets $x, x'$, there exist events $E$ and $E'$ such that {for all} $\lambda\geq 1$,
\begin{align*}
&D_\lambda(A(x) \mid E \| A(x') \mid E') \leq \rho\lambda, \\
&D_\lambda(A(x') \mid E' \| A(x) \mid E) \leq \rho \lambda,\\
&\P(A(x) \in E) \geq 1 - \delta, \text{ and}\\ &\P(A(x') \in E') \geq 1 - \delta.
\end{align*}
\end{condition}

Our definition is adapted from the approximate R\'enyi differential privacy definition due to \citet{Papernot022}. We restate the (unconditional) definition below.

\begin{condition}[Adapted from \citet{Papernot022}]\label{def:convex}
For any neighboring datasets $x, x'$, there exist distributions $P', P'', Q', Q''$ such that the outputs are distributed according to the following mixture distributions:
\[
 A(x)  \sim (1 - \delta)P' + \delta P'' ,\qquad A(x') \sim (1 - \delta)Q' + \delta Q''
\]
with $\text{for all }\lambda\geq 1,~ D_\lambda(P' \| Q') \leq \rho \lambda$ and $D_\lambda(P' \| Q') \leq \rho \lambda$.
\end{condition}

\begin{thm}
\label{thm:equiv}
Conditions \ref{def:original} and \ref{def:convex} are equivalent.
\end{thm}
\begin{proof}[\textbf{Proof of Theorem~\ref{thm:equiv}}]
Fix any neighbors $x, x'$. Suppose an algorithm $A$ satisfies Condition \ref{def:original} for some events $E, E'$. Then we could let $P'$ and $Q'$ be the conditional distributions $\P(A(x) \in \cdot \mid A(x) \in E)$ and $\P(A(x') \in \cdot \mid  A(x') \in E')$ respectively. Then let 
\begin{align*}
P''(\cdot) &= \frac{1}{\delta}\Big( \P(A(x) \in \cdot \mid A(x) \in E^c) \P(A(x) \in E^c) \\
&+ P'(\cdot) \ (\P(A(x) \in E) - (1 - \delta))\Big), \\
Q''(\cdot)  &= \frac{1}{\delta}\Big( \P(A(x') \in \cdot \mid A(x') \in E'^c)\P(A(x') \in E'^c) \\
&+ Q'(\cdot) \ (\P(A(x')\in E') - (1 - \delta))\Big)    .
\end{align*}
Then $A(x)$ is distributed according to the mixture $(1- \delta)P' + \delta P''$, and $A(x')$ is distributed according to the mixture $(1- \delta)Q' + \delta Q''$. Thus, $A$ also satisfies condition \ref{def:convex} given that $D_\lambda(P' \| Q') \leq \lambda \rho$ and 
$D_\lambda(Q' \| P') \leq \lambda \rho$ by our assumption of Condition \ref{def:original}.

Now suppose $A$ satisfies Condition \ref{def:convex} for some pairs of distributions $(P', P'')$ and $(Q', Q'')$. Then we can view the output distribution of $A(x)$ as generating a Bernoulli random variable $C$ such that with probability $(1 - \delta)$, $C=1$ and $A(x)$ draws an outcome from $P'$ and with probability $C=0$ and $A(x)$ draws an outcome from $P''$. Similarly, we can view $A(x')$ as flipping a coin $C'$ such that $A(x')$ draws an outcome from $Q'$ when $C'=1$. Then letting the events $E$ be all the randomness of $A(x)$ such that $C=1$ and $E'$ be all the randomness of $A(x')$ such that $C'=1$ satisfies condition \ref{def:original}.
\end{proof}



\subsection{Missing Proofs}

\begin{lem}\label{claim:nsm}
 The process $\{X_n\}_{n\geq 1}$ defined in \eqref{process2} is a $P'$-nonnegative supermartingale with respect to $(\calF_n(x))_{n\in \mathbb{N}}$.
\end{lem}

\begin{proof}[\textbf{Proof of Lemma~\ref{claim:nsm}}]
 For any $t\geq 0$,
\begin{align*}
&\quad \E_{P'}[X_{t+1} \mid \calF_t(x)]\\ &= \E_{P'} \Bigg[X_t \, \exp\Bigg((\lambda - 1) \log\left(\frac{P'_{t+1}(A_{t+1}(x) \mid \calF_{t}(x) )}{Q'_{t+1}(A_{t+1}(x) \mid \calF_{t}(x))}\right) \\
&\qquad\qquad - \lambda (\lambda - 1) \rho_{t+1}(x)\Bigg)\mid \calF_{t}(x)\Bigg]\\
&= X_t\, \E_{P'}\left[\left(\frac{P'_{t+1}(A_{t+1}(x) \mid \calF_{t}(x) )}{Q'_{t+1}(A_{t+1}(x) \mid \calF_{t}(x))}\right)^{(\lambda - 1)} \mid \calF_{t}(x)\right] \, \\
&\qquad\qquad \cdot \exp(-\lambda (\lambda - 1) \rho_{t+1}(x))\\
 &\leq X_t \,  \exp(\lambda (\lambda - 1) \rho_{t+1}(x)) \, \exp(-\lambda (\lambda - 1) \rho_{t+1}(x)) \\
 &= X_t,
\end{align*}
where the last inequality follows from the Renyi divergence bound due to approximate zCDP.
\end{proof}

\begin{lem}\label{renyibound}
Consider measures $P'$ and $Q'$ defined in \eqref{products}. Their R\'enyi divergence satisfies 
\[D_\lambda\left(P'(A_{1:N(x)}(x)) \| Q'(A_{1:N(x)}(x)) \right) \leq \rho \lambda.
\]
\end{lem}
\begin{proof}[\textbf{Proof of Lemma~\ref{renyibound}}]
By the definition of $X_n$ and that $\E_{P'}[X_{N(x)}] \leq \E_{P'}[X_0] = 1$, we have
{\small
\begin{align*}
&\E_{A_{1:N(x)}(x)\sim P'}\left[ \exp\left((\lambda - 1) M_{N(x)} \right)\right] \leq 1   \iff \\ &\E_{P'}\Bigg[ \exp\Bigg((\lambda - 1)\sum_{m\leq N(x)} \Bigg\{ \log\left(\frac{P'_m(A_m(x) \mid \calF_{m-1}(x) )}{Q'_m(A_m(x) \mid \calF_{m-1}(x))}\right)  \\
&\qquad - \lambda \rho_m(x)\Bigg\} \Bigg)\Bigg] \leq 1 \iff\\
&\E_{ P'}\Bigg[ \left(\frac{P'(A_{1:N(x)})}{Q'(A_{1:N(x)})}\right)^{\lambda - 1} \\
&\qquad \cdot \exp\left( -  (\lambda - 1)  \lambda \sum_{m\leq N(x)}\rho_m(x)\right)\Bigg] \leq 1 .
\end{align*}
}%
By the definition of stopping time $N$, we have 
$\sum_{m\leq N(x)} \rho_m(x) \leq \rho$, which implies the stated Renyi divergence bound.
\end{proof}

\begin{lem}\label{lem:convex-combo-proof}
Let likelihood functions $P, Q, P', Q'$ be defined in \eqref{eq:P}, \eqref{eq:Q}, and \eqref{products}. Then there exists likelihood functions $P''$ and $Q''$ such that
\begin{align*}
    P = (1 - \delta) P' + \delta P'',\\
        Q = (1 - \delta) Q' + \delta Q''.\\
\end{align*}
\end{lem}

\begin{proof}[\textbf{Proof of Lemma~\ref{lem:convex-combo-proof}}]
We will show the decomposition for $P$, and the proof follows identically for the decomposition of $Q$. First, we can express the likelihood $P(A_{1:N(x)}(x))$ as follows:
\begin{align*}
 &P(A_{1:N(x)}(x)) = \prod_{n=1}^{N(x)} P(A_{n}(x) \mid \calF_{n-1}(x)) \\
 &= \prod_{n = 1}^{N(x)} \big[(1 - \delta_n(x)) P'_{n}(A_{n}(x) \mid \calF_{n-1}(x)) \\
 &\;\;\qquad + \delta_n(x) P''_{n}(A_{n}(x) \mid \calF_{n-1}(x)) \big]\\
 &= \sum_{S\subseteq \{1, \ldots , N(x)\}} w_S \cdot f_S(A_{1:N(x)}(x))
 \end{align*}
where 
\begin{align*}
&f_S(A_{1:N(x)}(x)) := \\
&\prod_{n\in S}P''_n(A_{n}(x)\mid \calF_{n-1}(x))  \prod_{n\leq N(x), n \notin S} P'_n(A_{n}(x)\mid \calF_{n-1}(x))
\end{align*}
and $w_S=\left(\prod_{n\in S} \delta_n(x) \prod_{n \in \mathbb{N}\setminus S}(1 - \delta_n(x))\right)
$. Note that each $f_S$ is a likelihood of {the stopped process $A_{1:N(x)}(x)$ under input data set $x$}, and $f_\emptyset = P'(A_{1:N(x)}(x))$. Thus, it suffices to show that $w_\emptyset \geq 1 - \delta$ almost surely. 
To see this, we have
$$w_\emptyset = \prod_{n\leq N(x)} (1 - \delta_n(x)) \geq 1 - \sum_{n\leq N(x)} \delta_n(x) \geq 1 - \delta.$$
\end{proof}
\fi
\section{An Alternative Proof for Theorem \ref{thm:fully_adaptive}}
\label{sec:alterproof}

We begin by providing an alternative statement to Theorem~\ref{thm:fully_adaptive}, which is fully stated in terms of $\epsilon$'s and $\delta$'s. Straightforward calculations can confirm the equivalence of the two statements.
\begin{thm}
\label{thm:fully_adaptive_alt}
Suppose $(A_n)_{n \geq 1}$ is a sequence of algorithms such that, for any $n \geq 1$, $A_n$ is $(\epsilon_n, \delta_n)$-differentially private conditioned on $A_{1:n-1}$. Let $\epsilon > 0$  and $\delta = \delta' + \delta''$ be target privacy parameters such that $\delta' > 0, \delta'' \geq 0$. Consider the function $N : \R_{\geq 0}^\infty \times \R_{\geq 0}^\infty \rightarrow \N$ given by

$$
N((\epsilon_n)_{n \geq 1}, (\delta_n)_{n \geq 1}) := \inf\left\{n  : \epsilon < \sqrt{2\log\left(\frac{1}{\delta'}\right)\sum_{m \leq n + 1}\epsilon_m^2} + \frac{1}{2}\sum_{m \leq n + 1}\epsilon_m^2 \quad\text{ or }\quad \delta'' <  \sum_{m \leq n + 1}\delta_m\right\}.
$$
Then, the algorithm $A_{1:N(\cdot)}(\cdot) : \calX \rightarrow \calY^\infty$ is $(\epsilon, \delta)$-DP, where $N(x) := N((\epsilon_n(x))_{n \geq 1}, (\delta_n(x))_{n \geq 1})$. In other words, $N$ is an $(\epsilon, \delta)$-privacy filter.
\end{thm}
We first prove Theorem~\ref{thm:fully_adaptive_alt} under a stronger assumption on the algorithms being composed.
\begin{lem}
\label{lem:pdp_fil}
Theorem~\ref{thm:fully_adaptive_alt} holds under the stronger assumption that, for any $n \geq 1$, $A_n$ is $(\epsilon_n, \delta_n)$-pDP conditioned on $A_{1:n - 1}$.
\end{lem}
To prove Lemma~\ref{lem:pdp_fil}, we need to following bound on the conditional expectation of privacy loss, which can be immediately obtained from the bound on expected privacy loss presented in \citet{bun2016concentrated}.

\begin{lem}[Proposition 3.3 in \citet{bun2016concentrated}]
\label{lem:priv_loss_bound}
Suppose $A$ and $B$ are algorithms such that $A$ is $\epsilon$-differentially private conditioned on $B$. Then, for any input dataset $x \in \calX$ and neighboring dataset $x' \sim x$, we have that
$$
\E\left(\calL(x, x') | B(x)\right) \leq \frac{1}{2}\left(\epsilon(B(x))\right)^2.
$$
\end{lem}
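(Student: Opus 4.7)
The plan is to reduce the conditional statement to the unconditional KL divergence bound of \citet{bun2016concentrated}, which asserts that if two distributions $P$ and $Q$ satisfy the pure $\epsilon$-DP relation $e^{-\epsilon} \leq dP/dQ \leq e^\epsilon$, then $D_{KL}(P \Vert Q) \leq \tfrac{1}{2}\epsilon^2$. The key observation is that $\E(\calL(x, x') \mid B(x))$ is, essentially by the definition of conditional privacy loss in Definition~\ref{def:cploss}, the conditional KL divergence between the laws of $A(x)$ and $A(x')$ given $B(x)$:
$$
\E\bigl(\calL(x, x') \mid B(x)\bigr) \;=\; \int p^x(y \mid B(x)) \log\frac{p^x(y \mid B(x))}{p^{x'}(y \mid B(x))}\, dy.
$$

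First, I would freeze a realization $B(x) = z$. By the hypothesis that $A$ is $\epsilon$-DP conditioned on $B$, the two conditional probability measures $\mu_z := \P(A(x) \in \cdot \mid B(x) = z)$ and $\nu_z := \P(A(x') \in \cdot \mid B(x) = z)$ satisfy the pure differential privacy relation $\mu_z(G) \leq e^{\epsilon(z)} \nu_z(G)$ for every measurable $G$ (and similarly with $\mu_z$ and $\nu_z$ swapped, since neighboring is a symmetric relation). Applying the unconditional Bun--Steinke bound to the ordinary probability measures $\mu_z$ and $\nu_z$ yields $D_{KL}(\mu_z \Vert \nu_z) \leq \tfrac{1}{2}\epsilon(z)^2$ pointwise in $z$. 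Substituting $z = B(x)$ and writing $\epsilon = \epsilon(B(x))$ then gives the claimed bound.

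The only mild subtleties are measure-theoretic: one needs the conditional densities $p^x(\cdot \mid B(x))$ and $p^{x'}(\cdot \mid B(x))$ to exist so that the conditional KL makes sense as a random variable, and one needs the pointwise KL bound in $z$ to descend to the almost-sure bound on $\E(\calL(x, x') \mid B(x))$. Both are handled by the standing assumption (see the footnote in Definition~\ref{def:cploss}) that $\calY$ and $\calZ$ are Polish spaces with their Borel $\sigma$-algebras, under which regular conditional distributions exist, and by standard monotonicity of conditional expectation. Aside from this bookkeeping, the main step is simply to recognize $\E(\calL \mid B(x))$ as a conditional KL divergence and invoke the cited Bun--Steinke result fiberwise.
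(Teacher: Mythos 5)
Your proof is correct and is precisely the argument the paper leaves implicit: the paper simply cites Proposition~3.3 of \citet{bun2016concentrated} and asserts the conditional version ``can be immediately obtained,'' which is exactly your fiberwise application of the unconditional KL bound after recognizing $\E(\calL(x,x') \mid B(x))$ as a conditional KL divergence. The one spot where you could be a bit more careful is the reverse inequality $\nu_z(G) \leq e^{\epsilon(z)}\mu_z(G)$: swapping the roles of $x$ and $x'$ in Definition~\ref{def:cdp} yields a statement conditioned on $B(x')$ rather than $B(x)$, so ``symmetry of the neighboring relation'' is not quite enough --- one also needs the (standard, implicitly assumed) fact that the cross-conditional law $\P(A(x_b) \in \cdot \mid B(x_{b'}) = z)$ depends only on the observed value $z$ and on $b$, not on which neighbor $b'$ generated $z$, so that both directions of the $\epsilon$-indistinguishability relation hold between the same pair $\mu_z, \nu_z$; the two-sided relation really is needed, since a one-sided density-ratio bound alone only gives $D_{\mathrm{KL}} = O(\epsilon)$, not $\tfrac{1}{2}\epsilon^2$.
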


Now, we prove Lemma~\ref{lem:pdp_fil}.

\begin{proof}[\textbf{Proof of Lemma~\ref{lem:pdp_fil}}]
To begin, we assume that the algorithms $(A_n)_{n \geq 1}$ satisfy $(\epsilon_n, 0)$-pDP conditioned on $A_{1:n - 1}$. We will show how to alleviate this assumption on the approximation parameter in the second half of the proof. Fix an input database $x \in \calX$. For convenience, we denote by $(\calF_n(x))_{n \in \N}$ the natural filtration generated by $(A_n(x))_{n \geq 1}$. Since we have fixed $x \in \calX$, for notational simplicity, we write $\epsilon_n$ for the random variable $\epsilon_n(A_{1:n -1}(x))$ and define $\delta_n$ similarly. Additionally, by $N$ we mean the stopping time $N((\epsilon_n)_{n \in \N}, (\delta_n)_{n \in \N})$. Recall that we have already argued that, for any neighboring dataset $x' \sim x$, the process
{\small
\begin{align*}
M_n := M_n(x, x') = \calL_{1:n}(x, x') - \sum_{m \leq n}\E\left(\calL_m(x, x') | \calF_{m - 1}(x)\right)
\end{align*}
}%
is a martingale with respect to $(\calF_n(x))_{n \in \N}$. Further observe that its increments $\Delta M_n := \calL_n(x, x') - \E\left(\calL_n(x, x')|\calF_{n - 1}(x)\right)$ are $\epsilon_n^2$-subGaussian conditioned on $\calF_{n - 1}(x)$.

Thus, by Theorem~\ref{fact:line_cross}, we know that, for any $b, a > 0$, we have
$$
\P\left(\exists n \in \N : M_n \geq \frac{b}{2} + \frac{b}{2a}V_n\right) \leq \exp\left(\frac{-b^2}{2a}\right),
$$
where the process $(V_n)_{n \in \N}$ given by $V_n := \sum_{m \leq n}\epsilon_m^2$ is the accumulated variance up to and including time $n$. Thus, it suffices to optimize the free parameters $a$ and $b$ to prove the result.

To do this, consider the following function $f : \R_{\geq 0} \rightarrow \R_{\geq 0}$ given by
$$
f(y) = \sqrt{2\log\left(\frac{1}{\delta'}\right)y} + \frac{1}{2}y.
$$
Clearly, $f$ is a quadratic polynomial in $\sqrt{y}$ which is strictly increasing. In particular, one can readily check that 
\begin{equation}
\label{eq:opt}
y^\ast :=  \left(-\sqrt{2\log\left(\frac{1}{\delta'}\right)} + \sqrt{2\log\left(\frac{1}{\delta'}\right) + \epsilon}\right)^2
\end{equation}
solves the equation $f(y) = \epsilon$, where $\epsilon > 0$ is the target privacy parameter. 

As such, setting $a := y^\ast$ and $b := \sqrt{2\log\left(\frac{1}{\delta'}\right)y^\ast}$ yields
$$
\exp\left(\frac{-b^2}{a}\right) = \exp\left(\frac{-2y^\ast\log\left(\frac{1}{\delta'}\right)}{y^\ast}\right) = \delta'.
$$

Furthermore, expanding the definition of $(M_n)_{n \in \N}$, we see that for the selected parameters the parameters yield, with probability at least $1 - \delta'$, for all $n \leq N$ we have:
{\small
\begin{align*}
&\calL_{1:n}(x, x') \leq \frac{b}{2} + \frac{b}{2a}V_n + \sum_{m \leq n}\E\left(\calL_m(x, x') \mid \calF_{m - 1}\right) \\
&\leq \frac{b}{2} + \frac{b}{2a}\sum_{m \leq n}\epsilon_m^2 + \frac{1}{2}\sum_{m \leq n}\epsilon_m^2\\
&= \frac{1}{2}\sqrt{2\log\left(\frac{1}{\delta'}\right)y^\ast} + \frac{1}{2}\frac{\sqrt{2\log\left(\frac{1}{\delta'}\right)y^\ast}}{y^\ast}\sum_{m \leq n}\epsilon_m^2 + \frac{1}{2}\sum_{m \leq n}\epsilon_m^2 \\
&\leq \frac{1}{2}\sqrt{2\log\left(\frac{1}{\delta'}\right)y^\ast} + \frac{1}{2}\sqrt{2\log\left(\frac{1}{\delta'}\right)y^\ast} + \frac{1}{2}\sum_{m \leq n}\epsilon_m^2 \\
&= \sqrt{2\log\left(\frac{1}{\delta'}\right)y^\ast} + \frac{1}{2}\sum_{m \leq n}\epsilon_m^2 \leq \sqrt{2\log\left(\frac{1}{\delta'}\right)y^\ast} + \frac{1}{2}y^\ast = \epsilon.
\end{align*}
}%
Thus, we have proven the desired result in the case where all algorithms have $\delta_n = 0$.

Now, we show how to generalize our result to the case where the approximation parameters $\delta_n$ are not identically zero. Define the events
\begin{align*}
&A := \left\{\exists n \leq N : \calL_{1:n}(x, x') > \epsilon\right\}, \text{ and} \\
&B := \left\{\exists n \leq N: \calL_n(x, x') > \epsilon_n\right\}.
\end{align*}
Our goal is to show that, with $N$ defined as in the statement of Theorem~\ref{thm:fully_adaptive}, that $\P(A) \leq \delta$. Simply using Bayes rule, we have that
$$
\P(A) = \P(A \cap B^c) + \P(A \cap B) \leq \P(A|B^c) + \P(B) \leq \delta' + \P(B),
$$
where the second inequality follows from our already-completed analysis in the case that $\delta_n = 0$. Now, we show that $\P(B) \leq \delta''$, which suffices to prove the result as we have, by assumption, $\delta = \delta' + \delta''$. 

Define the modified privacy loss random variables $(\widetilde{\calL}_n(x, x'))_{n \in \N}$ by
$$
\widetilde{\calL}_n(x, x') := \begin{cases} \calL_n(x, x') \qquad n \leq N \\
0 \qquad \text{otherwise}
\end{cases}.
$$
Likewise, define the modified privacy parameter random variables $\widetilde{\epsilon}_n$ and $\widetilde{\delta}_n$ in an identical manner. Then, we can bound $\P(B)$ in the following manner:
\begin{align*}
    &\P(\exists n \leq N : \calL_n(x, x') > \epsilon_n) = \P\left(\exists n \in \N : \widetilde{\calL}_n(x, x') > \widetilde{\epsilon}_n\right) \\
    &\leq \sum_{n = 1}^\infty \P\left(\widetilde{\calL}_n(x, x') > \widetilde{\epsilon}_n \right) = \sum_{n = 1}^\infty \E\P\left(\widetilde{\calL}_n(x, x') > \widetilde{\epsilon}_n | \mathcal{F}_{n- 1}\right)\\
    &\leq \sum_{n = 1}^\infty \E\widetilde{\delta}_n = \E\left[\sum_{n = 1}^\infty \widetilde{\delta}_n\right] =\mathbb{E}\left[\sum_{n \leq N}\delta_n\right] \leq \delta''.
\end{align*}
Thus, we have have proven the desired result in the general case.\end{proof}

Our key insight above is to view filters as functions of the ``intrinsic time" determined by privacy parameters, $\sum_{m \leq n}\epsilon_m^2$. Lemma~\ref{lem:pdp_fil} can also be obtained leveraging the analysis for R\'enyi filters \citep{feldman2020individual}. However, our approach to proving Lemma~\ref{lem:pdp_fil} has the advantage that it does not require reductions between different modes of privacy. While Lemma~\ref{lem:priv_loss_bound}, which bounds expected privacy loss, does require some complicated analysis, we only ever need to apply Lemma~\ref{lem:pdp_fil} to instances of randomized response, in which case computing the privacy loss bound is trivial. 

We now use Lemma~\ref{lem:pdp_fil} to prove Theorem~\ref{thm:fully_adaptive_alt}. Recall that Lemma~\ref{lem:dp_to_pdp} shows that algorithms that satisfy pDP also satisfy DP, but the converse is not true and may require a conversion cost. To avoid this cost, we define following generalization of randomized response.

\begin{definition}[Conditional Randomized Response]
\label{def:rr}
Let $\calR := \{0, 1, \top, \bot\}$ and $2^\calR$ be the corresponding power set of $\calR$. Then, $R$ taking inputs in $\{0, 1\}$ to outputs in the measurable space $(\calR, 2^\calR)$ is an instance of $(\epsilon, \delta)$-randomized response if, for $b \in \{0, 1\}$, $R(b)$ outputs the following:
$$
R(b) = \begin{cases} b  &\text{with probability } (1 - \delta)\frac{e^\epsilon}{1 + e^\epsilon} \\
1 - b &\text{with probability } (1 - \delta)\frac{1}{1 + e^\epsilon}\\
\top &\text{with probability } \delta \text{ if } b = 1\\
\bot &\text{with probability } \delta \text{ if } b = 0.
\end{cases}
$$
More generally, suppose $B : \{0, 1\} \rightarrow \calZ$ is a randomized algorithm. For functions $\epsilon, \delta : \calZ \rightarrow \R_{\geq 0}$, we say $R$ is an instance of $(\epsilon, \delta)$-randomized response conditioned on $B$ if, for any true input $b' \in \{0, 1\}$ and hypothesized alternative $b \in \{0, 1\}$, the conditional probability $\P(R(b) \in \cdot | B(b') = z)$ is the same as the law of $(\epsilon(z), \delta(z))$-randomized response with input bit $b$.
\end{definition}

Conditional $(\epsilon, \delta)$-randomized response satisfies both conditional $(\epsilon, \delta)$-DP and conditional $(\epsilon, \delta)$-pDP. We will leverage the fact that it satisfies both privacy definitions with the same parameters. A surprising result in the nonadaptive setting is that \textit{any} $(\epsilon, \delta)$-DP algorithm can be viewed as a randomized post-processing of $(\epsilon, \delta)$-randomized response \citep{kairouz2015composition}. We generalize this result to the adaptive conditional setting below. In the language of Blackwell's comparison of experiments \citep{blackwell1953equivalent}, instances of randomized response are ``sufficient" for instances of arbitrary DP algorithms, and we prove that the same is true for conditional randomized response and conditionally DP algorithms. In what follows, by a transition kernel $\nu$, we mean that for any  $b \in \calZ$ and $r \in \calR$, $\nu(\cdot, r \mid b)$ is a probability measure on $(\calY, \calG)$.

\begin{lem}[Reduction to Conditional Randomized Response]
\label{lem:cond_post_rr}
Let $A$ and $B$ map from $\calX$ to measurable spaces $(\calY, \calG)$ and $(\calZ, \calH)$, respectively. Suppose $A$ is $(\epsilon, \delta)$-differentially private conditioned on $B$. Fix neighbors $x_0, x_1 \in \calX$, and let $R$ be an instance of $(\epsilon, \delta)$-randomized response conditioned on $B'$, where $B' : \{0, 1\} \rightarrow \calZ$ is the restricted algorithm satisfying $B'(b) = B(x_b)$. Then, there is a transition kernel $\nu : \calG \times \calR \times \calZ \rightarrow [0, 1]$ such that, for all $b, b' \in \{0, 1\}$, $\P\left(A(x_b) \in \cdot \mid B'(b')\right) = \nu_{b, b'}$, where $\nu_{b, b'} = \E\left(\nu(\cdot, R(b) \mid B'(b')) \mid B'(b')\right)$.\footnote{\label{foot:avg_cond}By $\nu_{b, b'}(\cdot) := \E\left(\nu(\cdot, R(b) \mid B'(b')) \mid B'(b')\right)$, we mean that $\nu_{b, b'}$ is the (random) averaged probability measure: \begin{align*}
\nu_{b, b'}(\cdot) &= \P(R(b) = 1 \mid B'(b'))\nu(\cdot, 1 \mid B'(b')) \\
&+ \P(R(b) = 0 \mid B'(b'))\nu(\cdot, 0 \mid B'(b')) \\
&+ \P(R(b) = \bot \mid B'(b') )\nu(\cdot, \bot \mid B'(b')) \\
&+ \P(R(b) = \top \mid B'(b'))\nu(\cdot, \top \mid B'(b')).
\end{align*}
}
\end{lem}

Lemma~\ref{lem:cond_post_rr} tells us that the conditional distribution obtained by averaging the kernel $\nu(\cdot , R(b) \mid B'(b'))$ over the randomness in $R(b)$ matches the conditional distribution of $A(x_b)$.
To prove Lemma~\ref{lem:cond_post_rr}, first recall the important fact that \textit{any} differentially private algorithm can be viewed as a post-processing of randomized response \citep{kairouz2015composition}, as stated in Lemma~\ref{fact:post_proc_rr} below.

\begin{lem}[Reduction to Randomized Response \citep{kairouz2015composition}]
\label{fact:post_proc_rr}
Let algorithm $A : \calX \rightarrow \calY$ be $(\epsilon, \delta)$-DP. Let $R$ be an instance of $(\epsilon, \delta)$-randomized response. Then, for any neighbors $x_0, x_1 \in \calX$, there is a transition kernel $\nu : \calG \times \calR \rightarrow [0,1]$ such that for $b \in \{0, 1\}$, we have
$\P(A(x_b) \in \cdot ) = \nu_b$, where\footnote{\label{foot:avg} By $\nu_b(\cdot) := \E \nu(\cdot, R(b))$, we mean $\nu_b$ is the averaged probability measure given by 
\begin{align*}
\nu_b(\cdot) &= \P(R(b) = 1)\nu(\cdot, 1) + \P(R(b) = 0)\nu(\cdot, 0) \\
&+ \P(R(b) = \bot)\nu(\cdot, \bot) + \P(R(b) = \top)\nu(\cdot, \top).
\end{align*}
}
$\nu_b  = \E \nu(\cdot, R(b))$.

\end{lem}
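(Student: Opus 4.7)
The plan is to adapt the classical Kairouz--Oh--Viswanath decomposition. Fix neighbors $x_0, x_1 \in \calX$ and write $P_b := \P(A(x_b) \in \cdot)$. Let $\mu := P_0 + P_1$, with densities $p_0, p_1 := dP_0/d\mu, dP_1/d\mu$. I want to construct four probability measures $\nu_0, \nu_1, \nu_\bot, \nu_\top$ on $(\calY, \calG)$ satisfying
\[
P_0 = q \nu_0 + r \nu_1 + \delta \nu_\bot, \qquad P_1 = r \nu_0 + q \nu_1 + \delta \nu_\top,
\]
where $q := (1-\delta)\,e^\epsilon/(e^\epsilon + 1)$ and $r := (1-\delta)/(e^\epsilon + 1)$. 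Setting $\nu(\cdot, i) := \nu_i(\cdot)$ for $i \in \{0, 1, \bot, \top\}$ then gives the required averaged-kernel identity, since the coefficients $q, r, \delta$ are exactly the probabilities with which $R(b)$ takes each value.

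First I would isolate the \textbf{extremal regions}. Define $T_\bot := \{p_0 > e^\epsilon p_1\}$ and $T_\top := \{p_1 > e^\epsilon p_0\}$; these are disjoint for $\epsilon > 0$, and applying $(\epsilon, \delta)$-DP to $T_\bot$ and $T_\top$ yields
\[
\alpha_\bot := \int_{T_\bot}(p_0 - e^\epsilon p_1)\, d\mu \leq \delta,
\qquad
\alpha_\top := \int_{T_\top}(p_1 - e^\epsilon p_0)\, d\mu \leq \delta.
\]
The natural choice is to let the restriction of $\delta \nu_\bot$ to $T_\bot$ equal $(p_0 - e^\epsilon p_1)^+ d\mu$, and symmetrically for $\nu_\top$ on $T_\top$. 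After subtracting, the residuals $\tilde P_0 := P_0 - \delta \nu_\bot$ and $\tilde P_1 := P_1 - \delta \nu_\top$ are designed to be $(1-\delta)$-weighted probability measures satisfying the \emph{pure} $\epsilon$-DP inequalities $\tilde P_0 \leq e^\epsilon \tilde P_1$ and $\tilde P_1 \leq e^\epsilon \tilde P_0$, which a pointwise density check confirms in each of the three regions determined by $T_\bot, T_\top$. Inverting the $2 \times 2$ linear system $\tilde P_0 = q\nu_0 + r\nu_1$, $\tilde P_1 = r\nu_0 + q\nu_1$ then forces
\[
\nu_0 = \frac{q \tilde P_0 - r \tilde P_1}{q^2 - r^2}, \qquad \nu_1 = \frac{q \tilde P_1 - r \tilde P_0}{q^2 - r^2}.
\]
These are non-negative measures by the pure-DP inequalities (since $r/q = e^{-\epsilon}$), and they have total mass $1$ by the algebraic identity $q + r = 1 - \delta$ together with $\tilde P_b(\calY) = 1 - \delta$.

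The main obstacle will be the \textbf{padding step}: when $\alpha_\bot < \delta$ or $\alpha_\top < \delta$, the partial measures defined above have mass strictly less than $\delta$, and I need to inject leftover masses $\delta - \alpha_\bot$ and $\delta - \alpha_\top$ into $\nu_\bot$ and $\nu_\top$ without destroying the pure-DP relations on $\tilde P_0$ and $\tilde P_1$. My plan is to spread the leftover mass on the ``balanced'' region $B := \{e^{-\epsilon} p_0 \leq p_1 \leq e^\epsilon p_0\}$, using a sub-probability measure dominated by $\min(p_0, p_1)\, d\mu$, and to calibrate the two padding coefficients so that the induced perturbations of $\tilde P_0$ and $\tilde P_1$ are absorbed by the slack in the pointwise inequalities $p_0 - e^\epsilon p_1 \leq 0$ and $p_1 - e^\epsilon p_0 \leq 0$ that hold on $B$. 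For finite $\calY$ the existence of such a padding is an LP feasibility statement whose dual inequalities are exactly the $(\epsilon, \delta)$-DP conditions, so feasibility follows from Farkas' lemma; the general Polish case then follows by a discretization plus tightness argument, using the Polish-space assumption on $\calY$ made throughout the paper.
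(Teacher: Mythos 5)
The paper does not reprove this lemma: it is stated as a citation to \citet{kairouz2015composition}, and the KOV argument proceeds through the hypothesis-testing (privacy-region) characterization of $(\epsilon,\delta)$-indistinguishability together with a Blackwell-type comparison-of-experiments theorem for binary experiments, not through an explicit construction of the kernel. Your plan is a genuinely different, direct route, and the skeleton is reasonable: extracting $(p_0 - e^\epsilon p_1)^+$ and $(p_1 - e^\epsilon p_0)^+$, checking the residual pure-DP inequalities pointwise, and inverting the resulting $2\times 2$ system to recover $\nu(\cdot,0)$ and $\nu(\cdot,1)$ are all sound (including the observation that $q+r=1-\delta$ forces the right total masses).

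However, the padding step as you describe it is incorrect, and this is not merely a technicality. You propose to place the leftover masses $\delta - \alpha_\bot$ and $\delta - \alpha_\top$ entirely on the balanced region $B := \{e^{-\epsilon}p_0 \le p_1 \le e^\epsilon p_0\}$. This can be literally impossible. Take $\calY = \{0,1\}$, $\epsilon = 0$, $P_0 = (\tfrac12+\eta,\ \tfrac12-\eta)$ (giving probability $\tfrac12+\eta$ to the outcome ``$1$''), $P_1 = (\tfrac12,\ \tfrac12)$, and $\delta = 2\eta$ for small $\eta>0$. This pair is $(0,2\eta)$-DP, while $\alpha_\bot = \alpha_\top = \eta < \delta$, so padding of mass $\eta$ is needed in each of $\nu_\bot,\nu_\top$; but $B = \emptyset$, so there is nowhere to put it under your scheme. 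A valid decomposition \emph{does} exist here (e.g.\ $\nu_\bot = (\tfrac12,\tfrac12)$, $\nu_\top = (0,1)$, $\nu(\cdot,0)=\nu(\cdot,1)=\tfrac{1}{1-2\eta}(\tfrac12, \tfrac12 - 2\eta)$), but its $\nu_\bot$-padding lives on $T_\top$, not on $B$. So the correct formulation must allow the padding to spill onto $T_\bot$ and $T_\top$ as well, and the two coupled pointwise constraints $\tilde p_0 \le e^\epsilon \tilde p_1$, $\tilde p_1 \le e^\epsilon \tilde p_0$ must then be rechecked jointly there; your informal assertion that ``the dual inequalities are exactly the $(\epsilon,\delta)$-DP conditions'' is plausible for the full-support LP but is not established, and the restricted-to-$B$ LP that you actually set up is infeasible in the example above. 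The closing ``discretization plus tightness'' passage to the Polish case is also only a sketch. In short: the extraction and $2\times 2$ inversion are fine, but the padding construction needs to be replaced by a correct one (or by the KOV hypothesis-testing argument), and the Farkas claim needs to be carried out for the unrestricted LP, not the $B$-restricted one.
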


In Lemma~\ref{lem:cond_post_rr} of Section~\ref{sec:filter}, we generalized Lemma~\ref{fact:post_proc_rr} to the case of conditional differential privacy. To do this, we introduced \textit{conditional randomized response} in Definition~\ref{def:rr}. In conditional randomized response, on the event $\{B = z\}$, the conditional laws of $R(0)$ and $R(1)$ just become that of regular randomized response with some known privacy parameters $\epsilon(z)$ and $\delta(z)$. We now prove Lemma~\ref{lem:cond_post_rr}.\\

\begin{proof}[\textbf{Proof of Lemma~\ref{lem:cond_post_rr}}]
Let $b, b' \in \{0, 1\}$ be arbitrary. For any outcome $\{B'(b') = z\}$, let $\P_z(A(x_b) \in \cdot)$ be the probability measure $\P(A(x_b) \in \cdot | B'(b') = z)$. In particular, this measure does not depend on the input bit $b'$. By the assumptions of conditional differential privacy (Definition~\ref{def:cdp}), it follows that under the probability measure $\P_z$, $A(x_b)$ is $(\epsilon(z), \delta(z))$-differentially private.  Moreover, it also follows that $R$ is an instance of $(\epsilon(z), \delta(z))$-randomized response under $\P_z$. Consequently, Lemma~\ref{fact:post_proc_rr} yields the existence of a kernel $\nu_z$ such that $\P_z(A(x_b) \in \cdot) = \E_z \nu_z(\cdot, R(b))$, where the averaged measure is as defined in Footnote~\ref{foot:avg}. Setting $\nu(\cdot, R(b) | z) := \nu_{z}(\cdot, R(b))$, we see that
$$
\P(A(x_b) \in \cdot \mid B'(b') = z) = \E\left(\nu(\cdot, R(b) \mid z) \mid B'(b') = z\right),
$$
which thus yields
$$
\P(A(x_b) \in \cdot \mid B'(b')) = \E\left(\nu(\cdot, R(b) \mid B'(b')) \mid B'(b')\right),
$$
where the conditionally averaged measure is as described in Footnote~\ref{foot:avg_cond} in the main body of the paper. This proves the desired result.
\end{proof}

Lastly, before proving Theorem~\ref{thm:fully_adaptive_alt}, we need the following lemma. This lemma essentially tells us that if $A$ is $(\epsilon, \delta)$-pDP conditioned on $B$, and $A'$ is a randomized post-processing algorithm, then releasing the vector $(A, A')$ is also $(\epsilon, \delta)$-pDP conditioned on $B$. Note that this is \textit{not} in contradiction with the converse direction of Lemma~\ref{lem:dp_to_pdp}, as releasing the output of $A'$ alone may not satisfy conditional $(\epsilon, \delta)$-pDP. But once we observe $A$, since $A'$ is a post-processing, we can gleam no more information about the true underlying dataset.

\begin{lem}
\label{lem:post_proc_pdp}
Suppose $A, B$ are algorithms with inputs in $\calX$ and outputs in measurable spaces $(\calY, \calG)$ and $(\calZ, \calH)$ respectively. Assume $A$ is $(\epsilon, \delta)$-pDP conditioned on $B$. Let $(S, \calS) $ be a measurable space and suppose $\mu :\calS \times \calY \times \calZ \rightarrow [0, 1]$ is a conditional transition kernel. Suppose $A' : \calX \rightarrow S$ is an algorithm satisfying 
\begin{equation}
\label{eq:pp}
\P\left(A'(x) \in \cdot | A(x') = y, B(x') = z\right) = \mu(\cdot, y \mid z),
\end{equation}
for all $y \in \calY, z \in \calZ$, and $x, x' \in \calX$. Then, the joint algorithm $(A, A') : \calX \rightarrow \calY \times S$ is also $(\epsilon, \delta)$-pDP conditioned on $B$.
\end{lem}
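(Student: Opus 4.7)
The plan is to show that the conditional privacy loss of the joint algorithm $(A, C)$ equals the conditional privacy loss of $A$ alone, after which the claim follows immediately from the assumed conditional pDP of $A$. The intuition is that, once $A(x)$ and $B(x)$ are fixed, the kernel $\mu$ governing $C(x)$ does not depend on the input $x$, so $C$ contributes no additional information for distinguishing neighbors.

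Fix neighbors $x, x' \in \calX$, and let $p^x_{A|B}(\cdot \mid \cdot)$, $p^{x'}_{A|B}(\cdot \mid \cdot)$ denote conditional densities of $A(x), A(x')$ given $B(x)$ (whose existence we inherit from the Polish-space hypothesis in the footnote to Definition~\ref{def:cploss}). Let $\mu(\cdot, y \mid z)$ be the kernel from Equation~\eqref{eq:pp} and write $m(\cdot \mid y, z)$ for its density with respect to a dominating measure on $(S, \calS)$. By the chain rule for conditional densities, the conditional density of $(A(x), C(x))$ given $B(x)$ at the point $(y, s)$ is
\[
p^x_{(A,C)|B}(y, s \mid z) \;=\; p^x_{A|B}(y \mid z)\,m(s \mid y, z),
\]
and analogously for $x'$, where the second factor is the same $m(s \mid y, z)$ thanks to the hypothesis that the conditional law of $C$ given $(A, B)$ does not depend on the input.

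Applying Definition~\ref{def:cploss} to the joint algorithm $(A, C)$ with auxiliary $B$, the conditional privacy loss evaluated at the realized point $(A(x), C(x))$ is
\[
\calL^{(A,C)}_B(x, x') \;=\; \log\!\left(\frac{p^x_{A|B}(A(x)\mid B(x))\,m(C(x)\mid A(x), B(x))}{p^{x'}_{A|B}(A(x)\mid B(x))\,m(C(x)\mid A(x), B(x))}\right) \;=\; \calL^{A}_B(x, x'),
\]
since the $m$-factors cancel (they are almost-surely positive on the support of $C(x)$). Hence the random variable $\calL^{(A,C)}_B(x, x')$ is identical to $\calL^{A}_B(x, x')$. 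The conditional pDP assumption on $A$ gives
\[
\P\!\left(|\calL^{A}_B(x, x')| > \epsilon(B(x)) \,\big|\, B(x)\right) \;\leq\; \delta(B(x)),
\]
so the same bound holds for $\calL^{(A,C)}_B(x, x')$, proving that $(A, C)$ is $(\epsilon, \delta)$-pDP conditioned on $B$.

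The only delicate point is the measure-theoretic factorization of the joint conditional density into the marginal density of $A$ given $B$ times the $C$-kernel $m$; this is a standard consequence of regular conditional distributions on Polish spaces, which is the same assumption already invoked to justify Definition~\ref{def:cploss}. Beyond this, the argument is essentially an algebraic cancellation, and no concentration or composition machinery is required.
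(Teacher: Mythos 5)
Your proof is correct and follows essentially the same route as the paper's: both decompose the conditional density of $(A,C)$ given $B$ into the density of $A$ given $B$ times the kernel governing $C$ given $(A,B)$, observe that the latter factor is input-independent by Equation~\eqref{eq:pp} and hence cancels in the privacy loss ratio, and then invoke the pDP assumption on $A$. The only difference is notational — you name the common kernel density $m$ directly, whereas the paper introduces $q^x_{B,A}$ and $q^{x'}_{B,A}$ and notes they coincide almost surely — but the substance is identical.
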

\begin{proof}[\textbf{Proof of Lemma~\ref{lem:post_proc_pdp}}]
Let $x, x' \in \calX$ be arbitrary neighboring datasets. Let $q^x_B, q^{x'}_B$ be the corresponding conditional joint densities of $(A(x), A'(x))$ and $(A(x'), A'(x'))$ given $B(x)$ respectively. Likewise, let $p^x_B, p^{x'}_B$ be the corresponding conditional densities of $A(x)$ and $A(x')$ respectively conditioned on $B(x)$, and $q^x_{B, A}, q^{x'}_{B, A}$ the conditional densities of $A'(x)$ and $A'(x')$ given $A(x)$ and $B(x)$. Let $\calL_B^{(A, A')}(x, x')$ denote the joint privacy loss between $(A(x), A'(x))$ and $(A(x'), A'(x'))$ given $B(x)$, while $\calL_B^{A}(x, x')$ denotes the privacy loss between $A(x)$ and $A(x')$ given $B(x)$. We have, using Bayes rule,
\begin{align*}
    &\calL_{B}^{(A, A')}(x, x') = \log\left(\frac{q^x_B(A(x), A'(x)\mid B(x))}{q^{x'}_B(A(x), A'(x)\mid B(x))}\right) \\
    &= \log\left(\frac{p^x_B(A(x) \mid B(x))}{p^{x'}_B(A(x) \mid B(x))}\cdot\frac{q^x_{B, A}(A'(x)\mid B(x), A(x))}{q^{x'}_{B, A}(A'(x) \mid B(x), A(x))}\right) \\
    &= \log\left(\frac{p^x_B(A(x)\mid B(x))}{p^{x'}_B(A(x) \mid B(x))}\right) = \calL_B^{(A)}(x, x'),
\end{align*}
The first equality on the second line follows from the assumption outlined in Equation~\eqref{eq:pp}. More specifically, since we have
\begin{align*}
&\P\left(A'(x) \in \cdot | A(x),B(x)\right) = \mu(\cdot, A(x) \mid B(x)) = \\
&\P\left(A'(x') \in \cdot | A(x),B(x)\right),
\end{align*}
it follows that the conditional densities $q^x_{B, A}$ and $q^{x'}_{B, A}$ are equal almost surely. Since $A$ is $(\epsilon, \delta)$-pDP conditioned on $B$, the result now follows.
\end{proof}

We now can prove Theorem \ref{thm:fully_adaptive_alt} using these tools.

\begin{proof}[\textbf{Proof of Theorem~\ref{thm:fully_adaptive_alt}}]
Fix arbitrary neighbors $x_0, x_1 \in \calX$. Let $(R_n)_{n \geq 1}$ be a sequence of algorithms such that $R_n$ is an instance of $(\epsilon_n, \delta_n)$-randomized response conditioned on $A'_{1:n - 1}: \{0, 1\} \rightarrow \calY^{n - 1}$, where $A'_m : \{0, 1\} \rightarrow \calY$ is the restricted algorithm given by $A'_m(b) := A_m(x_b)$, for all $m \geq 1$. Lemma~\ref{lem:cond_post_rr} guarantees the existence of a sequence of transition kernels  $(\nu_n)_{n \geq 1}$, $\nu_n : \calG \times \calR \times \calY^{n - 1} \rightarrow [0, 1]$ such that, for all $n \geq 1$ and $b, b' \in \{0, 1\}$,
we have $\P(A_n'(b) \in \cdot \mid A'_{1:n - 1}(b')) = \nu_{b, b'}^{(n)}$ almost surely. Here, $\nu_{b, b'}^{(n)}$ is the averaged conditional probability, as defined in terms of $\nu_n$ in Lemma~\ref{lem:cond_post_rr} and Footnote~\ref{foot:avg_cond}. This equality means we can find an underlying probability space (i.e.\! a coupling) such that the random post-processing draws from the kernel $\nu_n(\cdot, R_n(b) \mid A'_{1:n -1}(b'))$ equal $A'_n(b)$ almost surely, for all $n \geq 1$.

Now, for any $n \geq 1$, since $R_n$ is an instance of $(\epsilon_n, \delta_n)$-randomized response conditioned on $A'_{1:n - 1}$, it follows that $R_n$ is in fact $(\epsilon_n, \delta_n)$-pDP conditioned on $A'_{1:n - 1}$. Moreover, this also implies that $R_n$ is $(\epsilon_n, \delta_n)$-pDP conditioned on $(A'_{1:n - 1}, R_{1:n - 1})$, since, by definition, $\epsilon_n$ and $\delta_n$ only depend on the realizations of $R_{1:n - 1}$ through the outputs of $A'_{1:n -1}$. By Lemma~\ref{lem:post_proc_pdp}, it follows that for all $n \geq 1$, the algorithm $(R_n, A'_n)$ is $(\epsilon_n, \delta_n)$-pDP conditioned on $(R_{1:n - 1}, A'_{1:n-1})$. Thus, by Lemma~\ref{lem:pdp_fil}, it follows that the composed algorithm $(R_{1:N'(\cdot)}(\cdot), A'_{1:N'(\cdot)}(\cdot))$ is $(\epsilon, \delta)$-DP, where $N'(b) := N(x_b)$ and $\epsilon, \delta$ and $N$, are as outlined in the statement of Theorem~\ref{thm:fully_adaptive}. 

Lastly, since differential privacy is closed under arbitrary post-processing \citep{dwork2014algorithmic}, it follows that $A'_{1:N'(\cdot)}(\cdot)$ is $(\epsilon, \delta)$-differentially private. Since $x_0$ and $x_1$ were arbitrary neighboring inputs, the result follows, i.e. $A_{1:N(\cdot)}(\cdot) :\calX \rightarrow \calY^\infty$ is $(\epsilon, \delta)$-differentially private.
\end{proof}

\section{Proof for Privacy Odometers in Theorem~\ref{thm:new_odometers} \label{app:proof}}

We now show the formal proof for our privacy odometers presented in Theorem~\ref{thm:new_odometers} in Section~\ref{sec:odometers}.

\begin{proof}[\textbf{Theorem~\ref{thm:new_odometers}}]
As in the proof of Theorem~\ref{lem:pdp_fil}, we first consider the case where $\delta_n = 0$ for all $n \geq 1$. In this case, fix an input dataset $x \in \calX$ and a neighboring dataset $x' \in \calX$. Let $(M_n)_{n \in \N}$ be the corresponding privacy loss martingale as outlined in Equation~\eqref{priv_martin}, where we implicitly hide the dependence on $x, x'$, which are fixed. Let $(u_n)_{n \geq 1}$ be one of the sequences outlined in the theorem statement, and define $U_n := u_n(\epsilon_{1:n}, \delta_{1:n})$ for all $n \geq 1$, where once again we write $\epsilon_n$ and $\delta_n$ for $\epsilon_n(A_{1:n - 1}(x))$ and $\delta_n(A_{1: n -1}(x))$ respectively. It follows from Theorems~\ref{fact:line_cross}, \ref{fact:mixture}, and \ref{fact:stitch} that
$$
\P\left(\exists n \in \N : M_n > B_n\right) \leq \delta,
$$
for $B_n = U_n - \frac{1}{2}\sum_{m \leq n}\epsilon_m^2$. Recalling that $M_n = \sum_{m \leq n}\{\calL_m(x, x') - \E(\calL_m(x, x')|\calF_{n - 1}(x))\}$ and  that $\E(\calL_n(x, x')|\calF_{n - 1}(x)) \leq \frac{1}{2}\epsilon_n^2$ for all $n \in \N$, it thus follows that
$$
\P\left(\exists n \in \N : \calL_{1:n}(x, x') > U_n \right) \leq \delta,
$$
where $(\calF_n(x))_{n \geq 1}$ is again the natural filtration generated by $(A_n(x))_{n \geq 1}$. Thus, since $x \sim x'$ were arbitrary, we have shown that $(u_n)_{n \geq 1}$ is a $\delta$-privacy odometer in the case $\delta_n = 0$ for all $n \geq 1$.

To generalize to the case where $\delta_n$ may be nonzero, we can apply precisely the same argument used in the second part of the proof of Lemma~\ref{lem:pdp_fil}, thus proving the general result.
\end{proof}

\section{An Algorithm Satisfying $(\epsilon, \delta)$-DP but not $(\epsilon, \delta)$-pDP}
\label{app:fail_dp}

In this appendix, we construct a simple algorithm taking binary inputs that satisfies $(\epsilon, \delta)$-DP but not $(\epsilon, \delta)$-pDP. In particular, this provides intuition as to why we conjecture our odometers constructed in Section~\ref{sec:odometers} would not hold under the assumption that the algorithms being composed satisfy $(\epsilon, \delta)$-DP in general. 

To this end, fix a privacy parameter $\epsilon > 0$ and an approximation parameter $\delta \in (0, 1)$. Let $A : \{0, 1\} \rightarrow \{0, 1, \top, \bot\}$ be an instance of $(\epsilon, \delta)$-randomized response, and let $B : \{0,1\} \rightarrow \{0, 1\}$ be defined by
$$
B(b) := \begin{cases}1 \qquad \text{if } A(b) \in \{1, \top\}, \\
0 \qquad \text{otherwise. }
\end{cases}
$$
Since differential privacy is closed under arbitrary post-processing, it follows that the constructed algorithm $B$ is $(\epsilon, \delta)$-differentially private. On the other hand, setting $x = 1$, $x' = 0$, we note that on the event $\left\{B(1) = 1\right\}$,
\begin{align*}
\calL_B(1, 0) &= \log\left(\frac{\P(B(1) = 1)}{\P(B(0) = 1)}\right) = \log\left(\frac{\P(A(1) = 1) + \P(A(1) = \top)}{\P(A(0) = 1) + \P(A(0) = \top)}\right)\\
&=\log\left(\frac{\delta + (1 - \delta)\frac{e^\epsilon}{1 + e^\epsilon}}{(1 - \delta)\frac{1}{1 + e^\epsilon}}\right)\\
&= \log\left( \frac{\delta+e^\epsilon}{1-\delta} \right)
> \epsilon.
\end{align*}
Since straightforward calculation yields 
$$
\P(B(1) = 1) = (1 - \delta)\frac{e^{\epsilon}}{1 + e^\epsilon} + \delta > \delta,
$$
we see that $B$ does not satisfy $(\epsilon, \delta)$-pDP.

\end{document}